\newcommand{\commentAGI}[1]{#1}  %
\newcommand{\commentARXIV}[1]{}  
\newcommand{\comment}[1]{} 
\newcommand{\commentvisible}[1]{} 
\newtheorem{theorem}{Theorem}
\newtheorem{proposition}[theorem]{Proposition}
\newtheorem{definition}{Definition}
\DeclareMathOperator*{\argmin}{arg\,min}
\definecolor{orange}{RGB}{255,127,0}
\definecolor{brown}{RGB}{150,70,0}
\definecolor{green}{RGB}{127,255,127}
\definecolor{darkgreen}{RGB}{0,127,0}
\definecolor{blue}{RGB}{127,127,255}
\definecolor{lightblue}{RGB}{150,150,255}
\definecolor{darkblue}{RGB}{0,0,127}
\definecolor{red}{RGB}{255,90,90}
\definecolor{grey}{RGB}{127,127,127}
\definecolor{pink}{RGB}{255,180,180}
\newcommand{\xaxis}{$x$-axis\xspace}
\newcommand{\yaxis}{$y$-axis\xspace}
\newcommand{\Steps}{\ensuremath{S}} 
\newcommand{\Length}{\ensuremath{L}} 
\newcommand{\LS}{\ensuremath{LS}} 
\newcommand{\Response}{\ensuremath{R}} 
\newcommand{\Exp}[1]{\ensuremath{\mathbb{E}[#1]}\xspace}
\newcommand{\Acc}{\ensuremath{\mathbb{A}}\xspace}
\newcommand{\Unif}[1]{\ensuremath{{{\cal{U}}{#1}}}\xspace}
\newcommand{\Psy}{\Psi}
\newcommand{\Psydiff}[1]{\ensuremath{\Psy_{#1}\xspace}}
\newcommand{\One}{\ensuremath{\mbox{\bf\sffamily 1}\xspace}}
\newcommand{\TAU}{}  
\title{A note about the generalisation of the C-tests} 
\title{C-tests revisited: back and forth with complexity} 
\author
{
	Jos\'{e} Hern\'{a}ndez-Orallo\commentAGI{\\
	{\normalsize\em DSIC, Universitat Polit\`ecnica de Val\`encia, Spain}\\
	{\normalsize \tt jorallo@dsic.upv.es}
	}
}
\institute{DSIC, Universitat Polit\`ecnica de Val\`encia, Spain\\  
\email{jorallo@dsic.upv.es}}
\date{December 31, 2014} 
\begin{document}

\maketitle

{
\commentAGI{\abstract In this exploratory note we ask the question of what a measure of performance for all tasks is like if we use a weighting of tasks based on a difficulty function. This difficulty function depends on the complexity of the (acceptable) policy for the task (instead of a universal distribution over tasks or an adaptive test). The resulting aggregations and decompositions are (now retrospectively) seen as the natural (and trivial) interactive generalisation of the $C$-tests.}
\commentARXIV{\abstract We explore the aggregation of tasks by weighting them using a difficulty function that depends on the complexity of the (acceptable) policy for the task (instead of a universal distribution over tasks or an adaptive test). The resulting aggregations and decompositions are (now retrospectively) seen as the natural (and trivial) interactive generalisation of the $C$-tests.}
 \\
{\bf Keywords}: Intelligence evaluation, artificial intelligence, $C$-tests, algorithmic information theory, universal psychometrics, agent response curve.
}

\section{Introduction}\label{sec:intro}

\commentAGI{Since the inception of algorithmic information theory (AIT) in the 1960s, its use to construct intelligence tests was hinted by some and explicitly suggested by \cite{Chaitin1982}. The first actual implementation of a}
\commentARXIV{A first} test using AIT was the $C$-test \cite{HernandezOrallo-MinayaCollado1998,HernandezOrallo2000a}, where the goal was to find a continuation of a sequence of letters, as in some IQ tasks \commentAGI{\cite{Sternberg2000,computermodels-iq2014}}, and in the spirit of Solomonoff's inductive inference problems: ``given an initial segment of a sequence, predict its continuation'' (as quoted in \cite[p.332]{Li-Vitanyi08}). Levin's $Kt$ complexity was used to calculate the difficulty of a sequence of letters. The performance was measured as an aggregated value over a range of difficulties: 
\begin{equation}
I(\pi) \triangleq \sum_{h=1}^H h^e \sum_{i=1}^N \frac{1}{N} \; \mbox{Hit}(\pi,x_{i,h}) \label{eq:ctest}
\end{equation}
%
%
%
\noindent where $\pi$ is the subject, the difficulties range from $h=1$ to $H$ and there are $N$ sequences $x_{i,k}$ per difficulty $h$. 
The function $\mbox{hit}$ returns 1 if $\pi$ is right with the continuation and 0 otherwise.   
If $e=0$ we have that all difficulties have the same weight. The $N$ sequences per difficulty were chosen (uniformly) randomly.

This contrasts with a more common evaluation in artificial intelligence based on average-case performance according to a probability of problems or tasks:
\begin{equation}
\Psy(\pi) \triangleq \sum_{\mu \in M} p(\mu) \cdot \Exp{R(\pi, \mu)} \label{eq:average} 
\end{equation}
\noindent where $p$ is a probability distribution on the set of tasks $M$, and $R$ is a result function of agent $\pi$ on task $\mu$. 
Actually, eq. \ref{eq:average} can also be combined with AIT, in a different way, by using a universal distribution \cite{solomonoff1964,Li-Vitanyi08}, i.e., $p(\mu) = 2^{-K(\mu)}$, where $K(\mu)$ is the Kolmogorov complexity of $\mu$, as first chosen by \cite{Legg-Hutter2007}.

The work in \cite{Legg-Hutter2007} has been considered a generalisation of  \cite{HernandezOrallo-MinayaCollado1998,HernandezOrallo2000a}, from static sequences (predicting a continuation of a sequence correctly) to dynamic environments (maximising rewards in an MDP). 
In this paper we challenge this interpretation and look for a proper generalisation of \cite{HernandezOrallo-MinayaCollado1998,HernandezOrallo2000a} using the notion of difficulty in the outer sum, as originally conceived and seen in eq.~\ref{eq:ctest}. The key idea is the realisation that for the $C$-test the task and the solution were the same thing. This meant that the difficulty was calculated as the size of the simplest programs that generates the sequence, which is both the task and the solution. Even if the complexity of the task and the solution coincide here, it is {\em the complexity of the solution what determines the difficulty of the problem}. 

However, when we move from sequences to environments or other kind of interactive tasks, the complexity of the policy that solves the task and the complexity of the environment are no longer the same. In fact, this is discussed in \cite{HernandezOrallo-Dowe2010,orallo2014JAAMAS}: the complexity of the environment is roughly an upper bound of the complexity of the acceptable policies (any agent that reach an acceptable performance value), but very complex environments can have very simple acceptable policies. In fact, the choice of $p(\mu) = 2^{-K(\mu)}$ has been criticised for giving too much weight to a few environments. Also, it is important to note that the invariance theorem is more meaningful for Kolmogorov Complexity than for Algorithmic Probability, as for the former it gives some stability for values of $K$ that are not very small, but for a probablity it is precisely the small cases that determine most of the distribution mass. In fact, any computable distribution can be approximated (to whatever required precision) by a universal distribution. This means that the choice of $p(\mu) = 2^{-K(\mu)}$ for Eq. \ref{eq:average} is actually a metadefinition, which leads to virtually any performance measure, depending on the UTM that is chosen as reference.



By decoupling the complexity of task and policy \commentAGI{we see that} we can go back to eq. \ref{eq:ctest} and work out a notion of difficulty of environments that depends on the complexity of the policy. While this may look retrospectively trivial, and the natural extension  in hindsight, we need to solve and clarify some issues, and properly analyse the relation of the two different philosophies given by eq.~\ref{eq:average} and eq.~\ref{eq:ctest}.

\commentAGI{The rest of the paper is organised as follows.} 
Section \ref{sec:background} discusses some previous work, introduces some notation and recovers the difficulty-based decomposition of aggregated performance. 
\commentAGI{Section \ref{sec:task} shows what happens if difficulty functions are based on task complexity.} 
Section \ref{sec:difficulty} introduces several properties about difficulty functions and the view of difficulty as policy complexity.
Section \ref{sec:probs} discusses the choices for the difficulty-dependent probability. 
Section \ref{sec:steps} briefly deals with the role of computational steps for difficulty. 
Section \ref{sec:discussion} closes the paper with a discussion.

\section{Background}\label{sec:background}

AI evaluation has been performed in many different ways
\commentAGI{\cite{newell1976computer,
gaschnig1983evaluation,
rothenberg1987evaluating,
geissman1988verification,
cohen1988evaluation,
decker1989evaluating,
langley1987research,
langley2011changing,
buchanan1988artificial,
simon1995artificial,
baldwin1995process,
falkenauer1998method,
langford2005clever,
LeggHutterTests2007,
whiteson2011protecting,
drummond2010warning,
anderson2011robotics,
madhavan2009performance,
schlenoff2011performance
}} (for a recent account of AI evaluation, see \cite{hernandez2014ai}), but a common approach is based on averaging performance on a range of tasks, as in eq. \ref{eq:average}.

\begin{figure}
\centering
{\sffamily\small

$$
\begin{array}{llll}
h=9  & : & $a, d, g, j, ...$ & $Answer: m$ \\
h=12 & : &  $a, a, z, c, y, e, x, ...$ & $Answer: g$ \\
h=14 & : & $c, a, b, d, b, c, c, e, c, d, ...$ & $Answer: d$  \\
\end{array}
$$
}
\vspace{-0.5cm}
\caption{Several series of different difficulties 9, 12, and 14 used in the $C$-test \cite{HernandezOrallo2000a}.}
\label{fig:Ctest}
\vspace{-0.3cm}
\end{figure}

In what follows, we will focus on the approaches that are based on AIT. As mentioned above, 
the first intelligence test using AIT was the so-called $C$-test \cite{HernandezOrallo-MinayaCollado1998,HernandezOrallo2000a}. 
Figure \ref{fig:Ctest} shows examples of sequences that appear in this test. 
The difficulty of each sequence was calculated as Levin's $Kt$, a time-weighted version of Kolmogorov complexity $K$.  
%
Some preliminary experimental results showed that human performance correlated with the absolute difficulty ($h$) of each exercise and also with IQ test results for the same subjects. Figure \ref{fig:Ctest2} shows the results (taken from \cite{HernandezOrallo-MinayaCollado1998,HernandezOrallo2000a}). HitRatio is defined as the inner sum of eq.~\ref{eq:ctest}:
\begin{equation}\label{eq:hitrate}
\mbox{HitRatio}(\pi,h) \triangleq \sum_{i=1}^N \frac{1}{N} \; \mbox{Hit}(\pi,x_{i,h})
\end{equation}
\noindent An interesting observation is that by arranging problems by difficulty we see that HitRatio seems to be very small from difficulty 8 on. This makes the estimation of the measure much easier, as we only need to focus on (the area of) a small interval of difficulties. In fact, this use of difficulty is common in psychometrics.

\begin{figure}
\centering
\vspace{-1.3cm}
\includegraphics[width=0.48\textwidth]{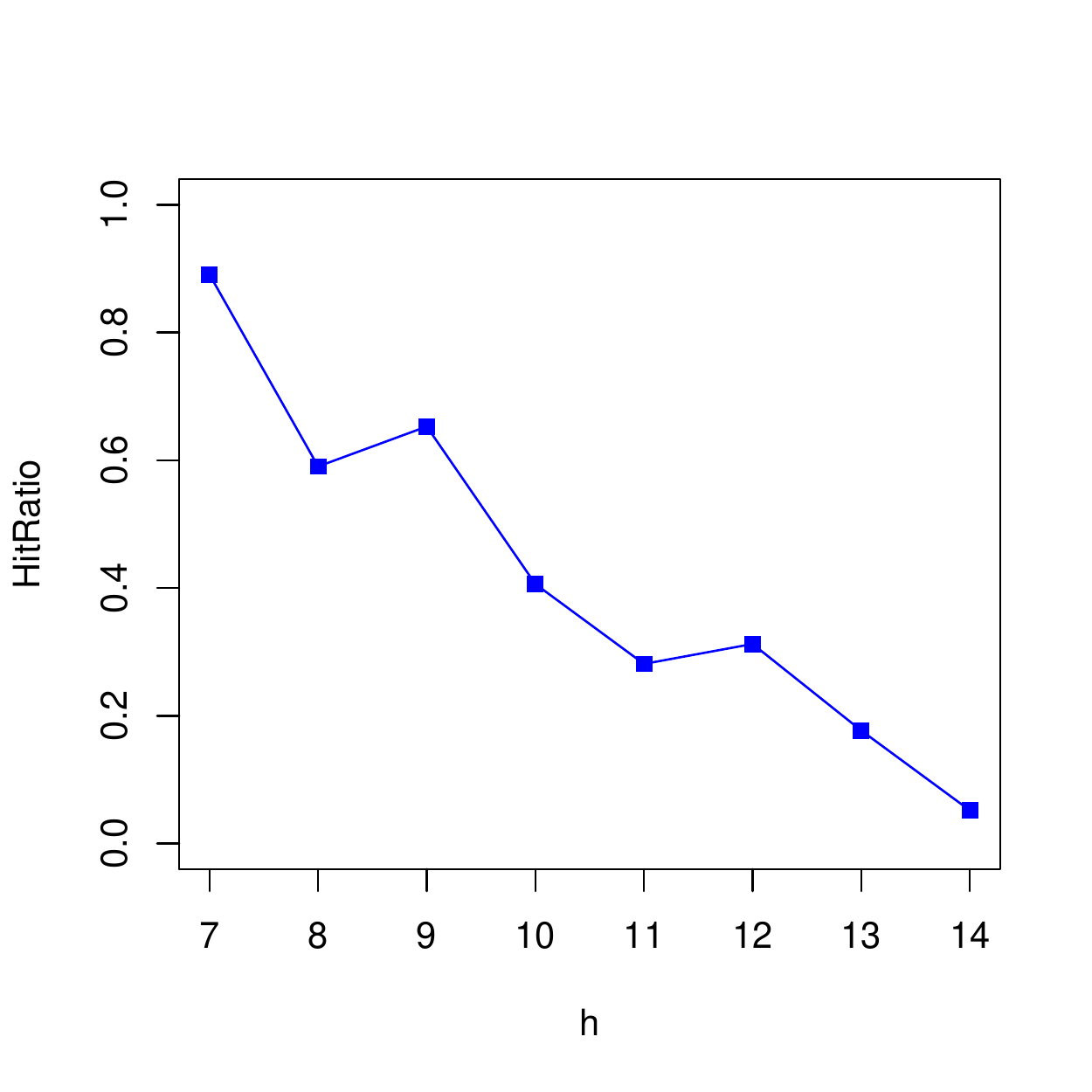} 
\vspace{-0.4cm}
\caption{Results obtained by humans on task of different difficulty in the $C$-test \cite{HernandezOrallo2000a}.}
\label{fig:Ctest2}
\vspace{-0.4cm}
\end{figure}

Several generalisations of the $C$-test were suggested (for ``cognitive agents [...] with input/output devices for a complex environment'' \cite{HernandezOrallo-MinayaCollado1998} where ``rewards and penalties could be used instead'' \cite{HernandezOrallo00b}) or extending them for other cognitive abilities \cite{HernandezOrallo00d}, but not fully developed. 
%
%
%

AIT and reinforcement learning were finally combined in \cite{Legg-Hutter2007}, where all possible environments were considered in 
eq.~\ref{eq:average}, instantiated with a universal distribution for $p$, i.e., $p(\mu) = 2^{-K(\mu)}$, with $K(\mu)$ being the Kolmogorov complexity of each environment $\mu$. 
Some problems (computability, discriminating power, overweight for small environments, time, ...) were discussed with the aim of making a more applicable version of this appraoch by  \cite{Hibbard2009} and \cite[secs. 3.3 and 4]{HernandezOrallo-Dowe2010}. \commentAGI{Some tests were developed \cite{CAEPIA2011,AGI2011Evaluating,AISB-AICAP2012a,leggveness2013approximation}, using the environment class defined in \cite{HernandezOrallo10b} and some ways of aggregated rewards \cite{HernandezOrallo10c}. Despite the limited results, the experiment had quite a repercussion \cite{TheEconomist2011,NewScientist2011,TheConversation2011,yonck2012toward}.}

While the aim of all these proposals was to measure intelligence, many interesting things can happen if AIT is applied to cognitive abilities other than intelligence, as suggested in \cite{HernandezOrallo00d} for the passive case and hinted in \cite[secs. 6.5 and 7.2]{HernandezOrallo-Dowe2010} for the dynamic cases, which proposes the use of different kinds of videogames as environments (two of the most recently introduced benchmarks and competitions in AI are in this direction \cite{bellemare13arcade,Schaul2014}). 
\commentAGI{Also, some hybridisations were also proposed \cite{AGI2011DarwinWallace,AGI2011Compression,AISB-AICAP2012b,manchester2012,AGI2012social,insa2014}, the notion of potential intelligence 
\cite{hernandez2013potential}, 
as well as an integration with the measurement of natural systems under the umbrella of `universal psychometrics' \cite{upsychometrics2} and the notion of universal test \cite{DoweHernandez-Orallo2013a_universal}.}

\commentAGI{
\subsection{Notation}\label{sec:notation}
}

We consider tests that are composed of tasks (also called environments or items) and are performed by agents (also called policies or subjects). 
The set of tasks is denoted by $M$. Its elements are usually denoted by $\mu$.
The set of agents is denoted by $\Pi$. Its elements are usually denoted by $\pi$.
The length in bits of a string is denoted by $\Length(x)$. We can think of a proper encoding of tasks and agents as strings. Given a UTM $U$ we define Kolmogorov complexity as $K_U(y) = \min_{x \::\:U(x)=y} \Length(x)$. We will usually drop the subindex $U$. 
\commentAGI{
The expected\footnote{This has to be `expected' if we consider stochastic environments or agents.} execution steps of $\pi$ when performing task $\mu$ are denoted by $\Exp{\Steps(\pi,\mu,\tau)}$ for a time limit\footnote{The time limit can be understood as the number of transitions, as in MDPs, or can represent a discrete or continuous time in other kind of synchronous or asynchronous modelling of tasks.} $\tau$. For MDPs one possibility is to consider the (expected value of the) maximum steps taken by $\pi$ for any transition \cite{HernandezOrallo-Dowe2010}. If there is a transition before $\tau$ that does not halt, then $\Exp{\Steps(\pi,\mu,\tau)}$ is infinite. The use of stochastic agents and environments makes things more complicated, as any possible transition with non-zero probability before the time limit $\tau$ that does not halt makes the expectation of $\Steps$ infinite.
}
Finally, we have the expected value of the response, score or result of $\pi$ in $\mu$ for a time limit $\tau$ as $\Exp{\Response(\pi,\mu,\tau)}$. 
The value of $\tau$ will be usually omitted. 
The $\Response$ function always gives values between 0 and 1 and we assume it is always defined (a value of $\Response=0$ is assumed for non-halting situations)\commentAGI{\footnote{Note that we do not go into details about how the $\Response$ and $\Steps$ functions are calculated. One common option in reinforcement learning is $\Exp{\Response(\pi, \mu, \tau)} = \Exp{\sum_{i=1}^{\infty} \gamma^i \rho_i(\pi,\mu)}$, where $\rho_i$ is the result of the reward function for iteration $i$. The term $\gamma^i$ is a discount term with $\gamma$ greater than (but usually close to) 1. The use of a discounting factor $\gamma$ makes the approximation possible even in cases where the rewards are divergent. 
Similarly, we could also use a discount function for $\Steps$ as $\Exp{\Steps(\pi,\mu,\tau)} = \Exp{\sum_{i=1..\infty} \gamma^i \sigma_i(\pi,\mu)}$, where $\sigma_i$ gives the execution steps of transition $i$. 
Even with these choices and a discount factor, an infinite sum would make both functions very difficult to estimate, as we could have a transition with a large value of $i$ giving a high value. For the $\Response$ function this is not so critical, as it is bounded and the contribution decays with $i$, but for $\Steps$ this is problematic.}.}   
We also define $\Exp{\LS(\pi,\mu,\tau)} \triangleq \Length(\pi) + \log \Exp{\Steps(\pi,\mu,\tau)}$. Logarithms are always binary. 
\commentAGI{
Inspired by Levin's $Kt$ (see, e.g., \cite{Levin73} or \cite{Li-Vitanyi08}), we can define $Kt(\mu \TAU) = \min_{\pi} \Exp{\LS(\pi,\mu \TAU)}$, which in this case depends on $\mu$ as well\footnote{
If $\LS$ is considered as a measure of {\em effort}, one can argue that space (i.e., memory) could also be considered, as this has been usually important for the analysis of algorithms and in cognitive science (working memory). However, execution steps are always equal or greater than working memory, so if we consider $\Steps$ there is no need to also include working memory, unless 
we are especially interested in giving relevance (and weight) to working memory.}.}

\commentvisible{
Effort, Cost or LST for an agent $\pi$ on a task $\mu$ up to steps $n$, over descriptional 
machine $U$. We consider running time (steps), working space (bits) and program length (bits).
Program length, working memory size and computation steps. 
For TM, a computation step usually implies moving left or right on a type, so it could be 
measured in bits as well. If there are n states on a TM (e.g., the smallest UTM has 15 states 
and 2 symbols), then each step actually encodes log N bits
\[ LST_U(\pi, \mu, n) = \mathbb{E} [ L(Length(\pi)) + S(\max_{t \leq n}(Space_t(U,\pi,\mu))) + T(\max_{t \leq n}(Time_t(U,\pi,\mu))) ]  \]
where L, S, T are increasing functions. It is better to talk amb Memory instead of Space and 
(Internal) Steps instead of Time, so we would have LMS (or SML).
If $L(x)=x$ (identity), and $S$ and $T$ is LOG
\[ LST_U(\pi, \mu, n) = \mathbb{E} [ Length(\pi) + LOG \max_{t \leq n}(Space_t(U,\pi,\mu)) + LOG \max_{t \leq n}(Time_t(U,\pi,\mu)) ]  \]
If the agent doesn't halt for some input then it has infinite LST. Even if we consider 
stochastic environments and agents, it is sufficient to have a case where it doesn't halt to 
have an infinite value of LST.
(Space could be eliminated, as it is always lower to or equal than time).
So we have more or less $Kt_{max}$ as in the  AIJ paper (definition 9).
\[C(\pi,\mu,n)\]
$C$ or $E(\pi| [\mu])$ is defined as the LST  of describing $\pi$ given $\mu$ (not its code). For instance, "output 12345" is easier to describe if $\mu$ outputs 12345, as "output what $\mu$ outputs" or "follow the indications" or even "try a sequence of numbers until you get good rewards".
Note that $C(\pi_{rand},\mu,n)$ is low. We can get rid of $\mu$ but then we would have that C(123451234512345....) would be the same as C(12745123451234812345...), i.e., the same with noise. The first is just "take the first five observations and repeat them". The second one would have to opt by "repeat 12345".
Also, the use of S in LST may be important, as the second one requires much more space to see the pattern, although if it is "repeat 12345" then it does not count in it. But if the environment also generates the pattern stochastically (i.e., it is not always 12345, with or without noise), then S in LST is important.
}


\commentAGI{
\subsection{Difficulty-based decomposition}
}

It is actually in \cite{upsychometrics2}, where we can find a first connection between the schemas of eq.~\ref{eq:ctest} and  eq.~\ref{eq:average}.
We adapt definition 14 in \cite{upsychometrics2}, which is a generalisation of eq.~\ref{eq:average}, by making the set $M$ and the task probability $p$ explicit as a parameters. 


\begin{definition}\label{def:psi} The expected average result 
for a task class $M$, a distribution $p$  and an agent $\pi$ is:
\begin{eqnarray}\label{eq:psi}
\Psy(\pi,M,p \TAU) \triangleq \sum_{\mu \in M} p(\mu) \cdot \Exp{\Response(\pi, \mu \TAU)}  
\end{eqnarray}
%
\end{definition}

\noindent And now we use proposition 4 in \cite{upsychometrics2} that decomposes it. First, we define partial results for a given difficulty $h$ as follows:
\begin{eqnarray}\label{eq:U}
 \Psydiff{h}(\pi,M,p \TAU) \triangleq  \sum_{\mu \in M, \hbar(\mu)= h} p(\mu|h) \cdot \Exp{\Response(\pi, \mu \TAU)} 
\end{eqnarray}
%
Where $\hbar$ is a difficulty function $\hbar: M \rightarrow \mathbb{R}^+ \cup \: 0$. 
Note that this parametrises the result of eq.~\ref{eq:psi} for different difficulties. For instance, for two agents $\pi_A$ and $\pi_B$ we might have that $\Psydiff{3}(\pi_A) < \Psydiff{3}(\pi_B)$ but $\Psydiff{7}(\pi_A) > \Psydiff{7}(\pi_B)$.
%
%
If we represent $\Psydiff{h}(\pi,M,p \TAU)$ on the \yaxis versus $h$ on the \xaxis we have a so-called agent response curve, much like Fig.~\ref{fig:Ctest2}. \commentAGI{In Fig.~\ref{fig:arc} we show an example of two agent response curves.
\begin{figure}
\centering
\vspace{-0.5cm}
\includegraphics[width=0.45\textwidth]{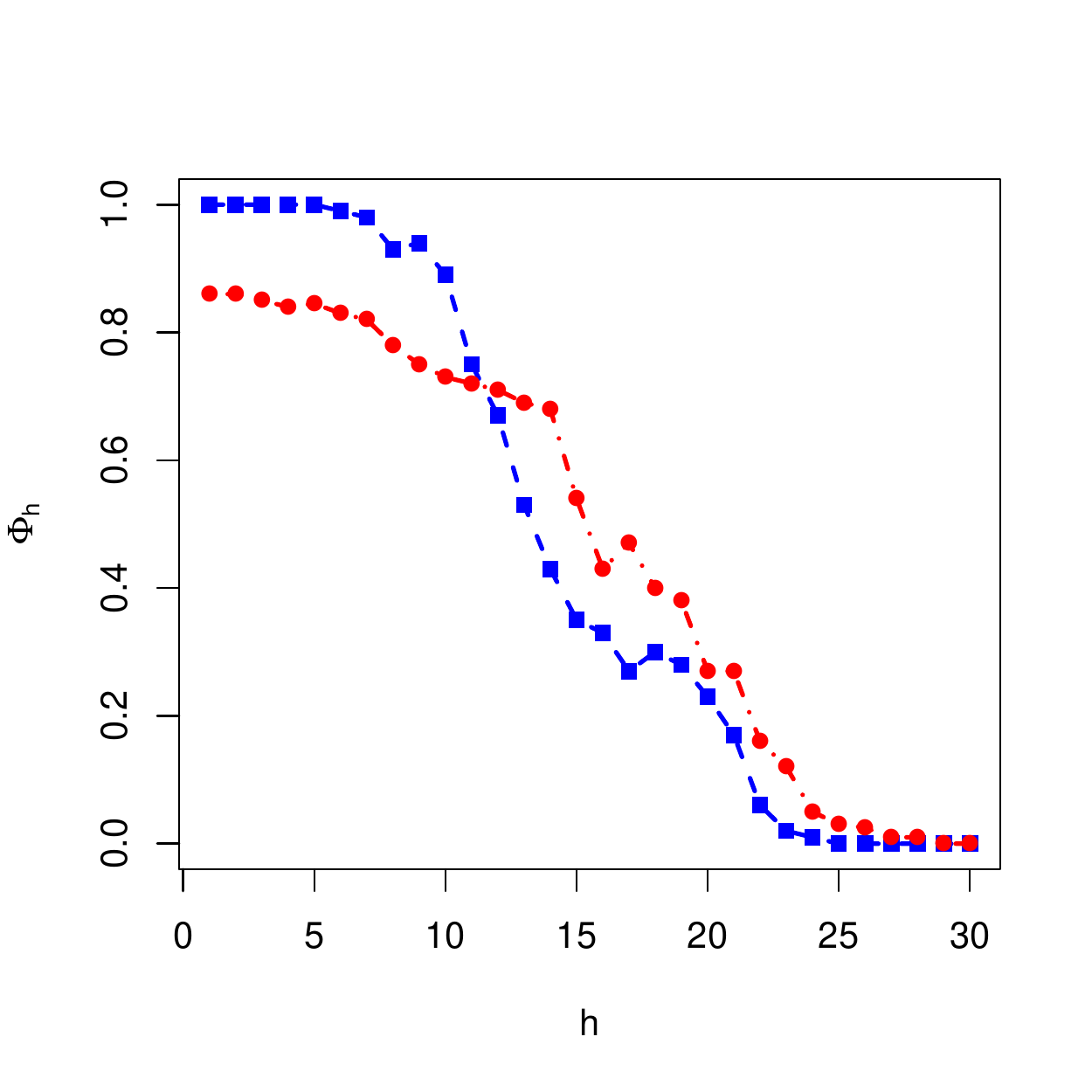}
\vspace{-0.2cm}
\caption{Two figurative agent response curves. Despite having equal areas, one is better for easier tasks while the other is better for more difficult tasks. What is more relevant (to fail in easy tasks or to excel in difficult tasks) is a matter of what we may be interested in when evaluating the systems.}
\label{fig:arc}
\vspace{-0.2cm}
\end{figure}
}

If we want to get a single number from an agent response curve we can aggregate performance for a range of difficulties, e.g., as follows: 


\begin{proposition}\label{prop:psidecomp} (\cite[prop. 4]{upsychometrics2})
The expected average result $\Psy(\pi,M,p \TAU)$ can be rewritten as follows: 
\commentAGI{
\begin{eqnarray}\label{eq:psi2-cont}
\Psy(\pi,M,p \TAU) = \int_{0}^{\infty} p(h) \Psydiff{h}(\pi,M,p \TAU) dh
\end{eqnarray}
or,} in the particular case when $\hbar$ only gives discrete values:
\begin{eqnarray}\label{eq:psi2-disc}
\Psy(\pi,M,p \TAU) = \sum_{h=0}^{\infty} p(h) \Psydiff{h}(\pi,M,p \TAU)
\end{eqnarray}
\end{proposition}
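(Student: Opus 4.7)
The plan is to prove this by partitioning the summation domain of $\Psy$ according to the level sets of the difficulty function $\hbar$, and then applying the definition of conditional probability to reassemble the result. This is essentially an instance of the law of total probability / total expectation applied to the weighted average in Definition \ref{def:psi}.

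First, I would observe that the difficulty function $\hbar: M \to \mathbb{R}^+ \cup \{0\}$ partitions the task class $M$ into the fibers $M_h := \{\mu \in M : \hbar(\mu) = h\}$ indexed by $h$ in the range of $\hbar$ (and $M_h = \emptyset$ otherwise). In the discrete case assumed in equation \ref{eq:psi2-disc}, the range of $\hbar$ is a countable subset of $\mathbb{R}^+ \cup \{0\}$, so we can split the outer sum in Definition \ref{def:psi} as
\begin{equation*}
\Psy(\pi,M,p) \;=\; \sum_{\mu \in M} p(\mu)\,\Exp{\Response(\pi,\mu)} \;=\; \sum_{h=0}^{\infty} \sum_{\mu \in M_h} p(\mu)\,\Exp{\Response(\pi,\mu)},
\end{equation*}
which is valid by absolute convergence (the responses are bounded in $[0,1]$ and $p$ is a probability distribution).

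Next, I would introduce the induced marginal $p(h) := \sum_{\mu \in M_h} p(\mu)$, i.e.\ the probability that a task drawn from $p$ has difficulty exactly $h$, and define the conditional $p(\mu|h) := p(\mu)/p(h)$ whenever $p(h) > 0$ (and arbitrarily, say $0$, when $p(h)=0$, in which case the inner sum contributes nothing anyway). Substituting $p(\mu) = p(h)\,p(\mu|h)$ in the inner sum and factoring $p(h)$ out yields
\begin{equation*}
\sum_{\mu \in M_h} p(\mu)\,\Exp{\Response(\pi,\mu)} \;=\; p(h)\sum_{\mu \in M_h} p(\mu|h)\,\Exp{\Response(\pi,\mu)} \;=\; p(h)\,\Psydiff{h}(\pi,M,p),
\end{equation*}
by equation \ref{eq:U}. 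Plugging this back into the split sum gives the desired identity \ref{eq:psi2-disc}. For the continuous case \ref{eq:psi2-cont}, the same argument goes through verbatim after replacing the outer sum over the (now uncountable) range of $\hbar$ by an integral, using the disintegration of $p$ into a marginal density $p(h)$ on difficulties and conditional distributions $p(\mu|h)$ on each fiber; the bounded and nonnegative integrand ensures Tonelli applies.

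The only subtle point, and the one I would treat carefully, is the handling of difficulty values $h$ with $p(h)=0$: the conditional $p(\mu|h)$ is not uniquely defined there, so $\Psydiff{h}$ is ambiguous, but the product $p(h)\Psydiff{h}$ is unambiguously $0$ and contributes nothing to the aggregated sum, so the proposition holds regardless of the convention chosen. Everything else is bookkeeping; the result is essentially a restatement of the law of total expectation in the notation of difficulty-indexed agent response curves.
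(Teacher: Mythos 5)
Your proof is correct: the paper itself gives no proof here (it defers to \cite[prop.~4]{upsychometrics2}), and your argument --- partitioning $M$ into the level sets of $\hbar$, factoring $p(\mu) = p(h)\,p(\mu|h)$, and invoking absolute convergence of the bounded, nonnegative series --- is exactly the law-of-total-expectation decomposition that the statement is built on. Your care with the $p(h)=0$ fibers is a sensible touch, and nothing further is needed.
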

\noindent where $p(h)$ is a \commentAGI{probability density function for eq.~\ref{eq:psi2-cont} and a} discrete probability function for eq.~\ref{eq:psi2-disc}. 
Note that equations \ref{eq:psi}, \ref{eq:U} and \ref{eq:psi2-disc} are generalisations, respectively, of equations \ref{eq:average}, \ref{eq:hitrate} and \ref{eq:ctest}.

\commentAGI{

\section{Difficulty as task complexity}\label{sec:task}

We are now considering the decomposition of definition \ref{def:psi} by using two very particular choices. 
The first choice is that we consider $p(\mu) = 2^{-K(\mu)}$. This is exactly what has been considered in \cite{Legg-Hutter2007,leggveness2013approximation}.
The second choice is that we consider that the difficulty of an environment is $\hbar(\mu)= K(\mu)$. In this very particular case\footnote{In \cite[prop. 3]{upsychometrics2}, it is shown that with these two choices the average difficulty diverges. This does not mean that eq.~\ref{eq:psi2-disc} diverges, which is what we work out here.}, we see that $\hbar$ is discrete ($K$ is defined on the natural numbers). 

With these two choices ($p(\mu)=2^{-K(\mu)}$ and $\hbar=K(\mu)$), the first thing that we realise is that {\em for those $\mu$ such that  $\hbar(\mu)= h$} we have that $p(\mu|h)= \frac{1}{N_M(h)}$, where $N_M(h) \triangleq |\{\hbar(\mu)= h\}|$. In other words, given a difficulty $h$, all tasks $\mu$ of that difficulty have the same $K$ and clearly the same probability, so we have a uniform distribution. The denominator, $N_M(h)$, is (much) lower than $2^h$ as we are using a prefix code and some (many) sequences of length $h$ are not self-delimited programs. 
The value $p(h)$ is then given by $\sum_{\mu \in M, \hbar(\mu)= h} 2^{-K(\mu)} = \sum_{\mu \in M, \hbar(\mu)= h} 2^{-h} = N_M(h)2^{-h}$.
So we have:
\begin{eqnarray*}
\Psy(\pi,M \TAU) & = & \sum_{h=0}^{\infty} N_M(h)2^{-h} \sum_{\mu \in M, \hbar(\mu)= h}  \frac{1}{N_M(h)}  \Exp{\Response(\pi, \mu \TAU)}   
\end{eqnarray*}
%
%
%
\noindent Even if these are very particular (and arguable unreasonable) choices of task probability and task difficulty, this straightforward transformation is helful to show that what we have is that all tasks of the same difficulty have the same weight (the inner sum) but the outer sum can go from a geometric distribution for difficulties (easy tasks are much more likely) if the prefix coding is a unary coding (so $N_M(h)= 1$, and  $p(h) = 2^{-h}$) to much more efficient codings where $N_M(h)$ is close to $2^{h-m}$ where $m$ is a constant that depends on the efficiency of the coding. For instance, for a prefix coding that uses a fixed-length word of size $c$ (character size in bits) and a special character as end delimiter, we would have $N_M(h)=(2^c-1)^{(\frac{h}{c}-1)}$ programs when $h$ is divisible per $c$ and 0 otherwise. In this case $p(h) = N_M(h)2^{-h}$ can be approximated by $\frac{1}{c \cdot \nu}b^{-h}$, where $b = 1+2^{-c}$ and $\nu$ is a constant normalisation term such that $\sum w(h) \leq 1$. This is what we see on Figure \ref{fig:coding} (left) with $c=4$ (so $b=1.0625$). However, we see that $w(h)$ can have a base as close to 1 as we like, so the probability would look uniform for small values of $h$, as Figure \ref{fig:coding} (right). Nonetheless, it still is a geometrical distribution (otherwise the sum would diverge, which is not possible as we assume that $\Response$ is bounded as for equation \ref{eq:psi}). 

\begin{figure}
\centering
\includegraphics[width=0.48\textwidth]{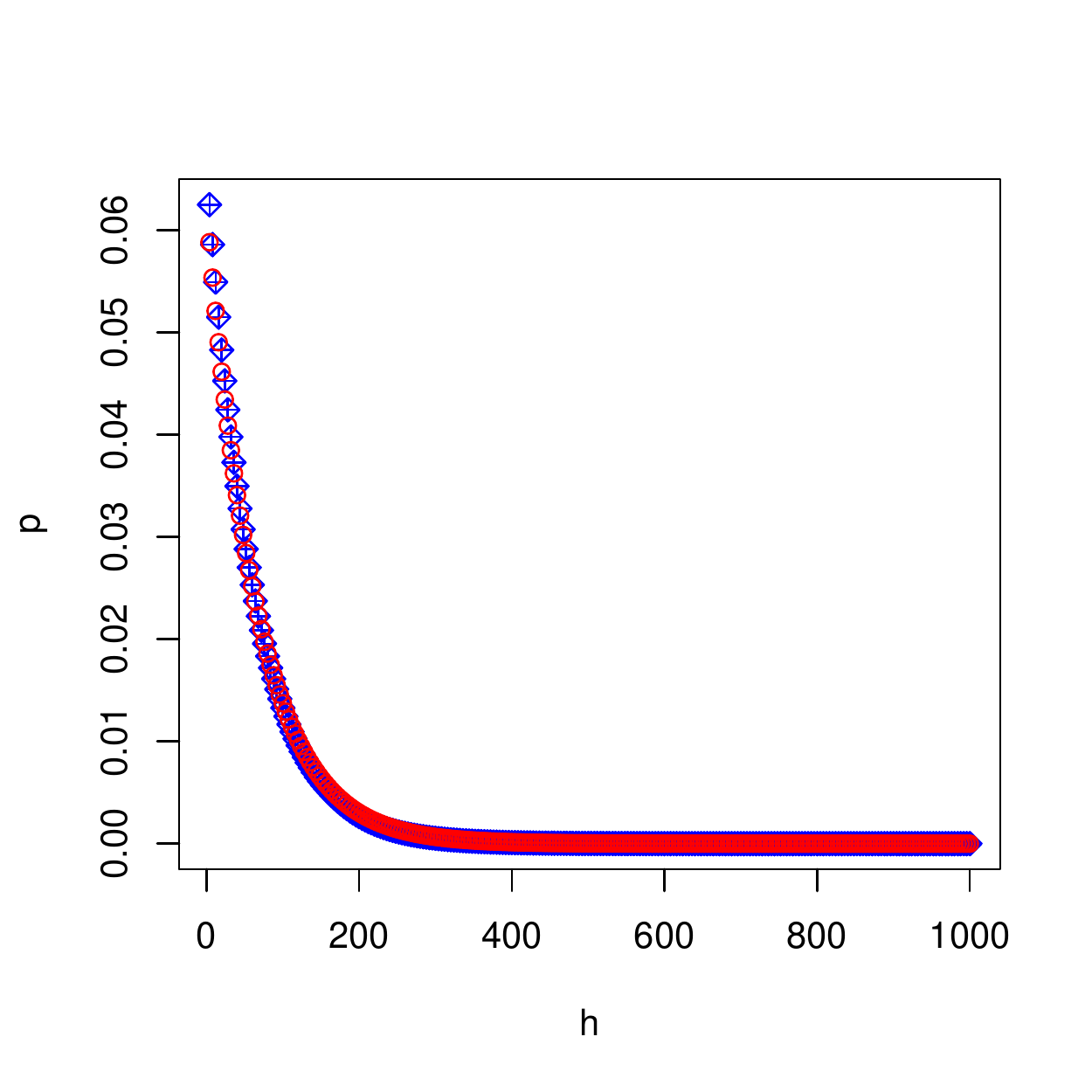} \hfill
\includegraphics[width=0.48\textwidth]{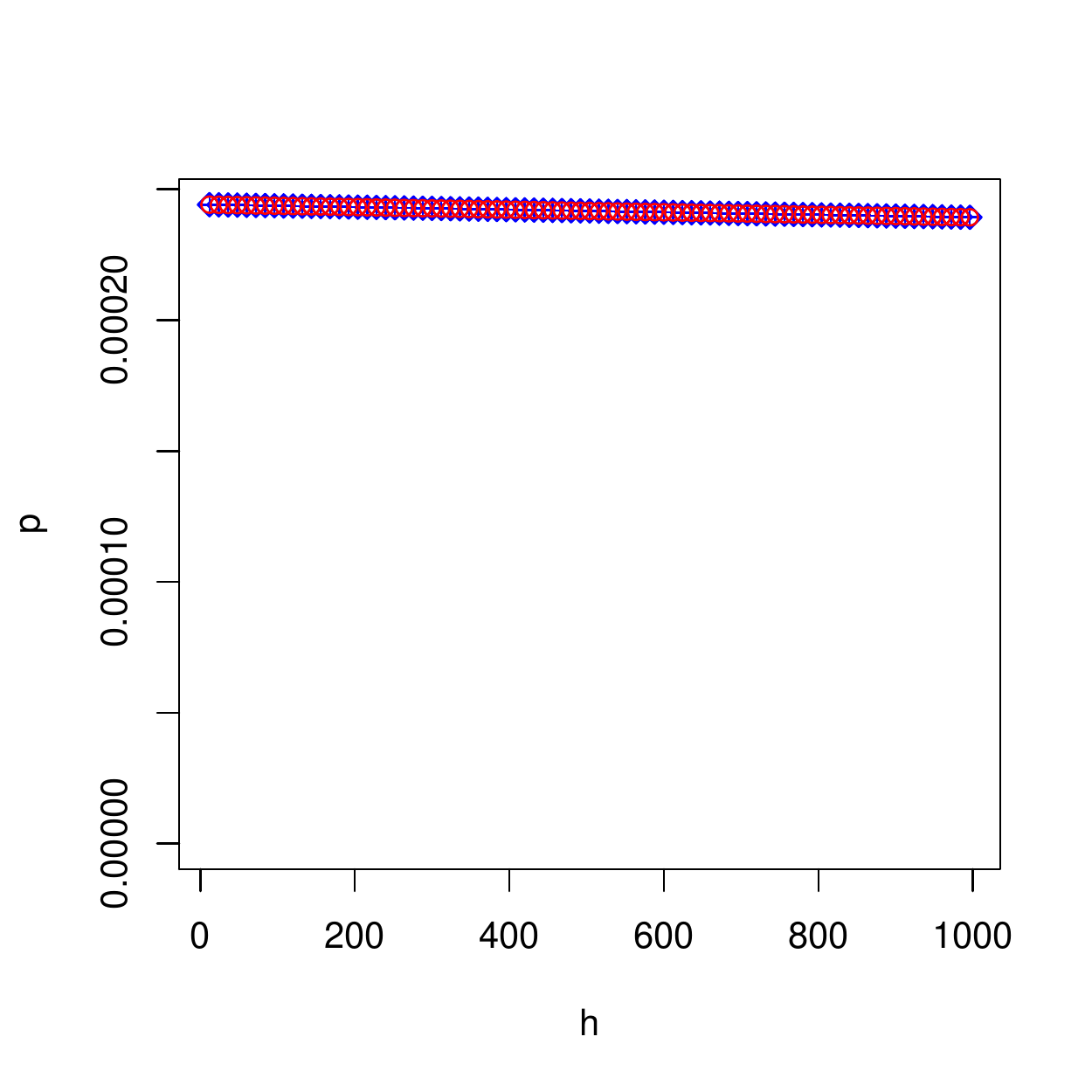} 
\vspace{-0.3cm}
\caption{Left: $p(h) = N_M(h)2^{-h}$ with a character coding with one end delimiter with word length $c=4$. Right: $p(h)$ with word length $c=12$. In blue the true values and in red the approximation $\frac{1}{c \cdot \nu}b^{-h}$, where $b = 1+2^{-c}$ ($b=1.0625$ on the left and $b=1.000244$ on the right). The distribution on the right looks almost uniform but it still geometric.}
\label{fig:coding}
\end{figure}

All this highlights how important the prefix coding is for the understanding of these distributions. In fact, it is not only that the definition depends strongly on the choice of the reference UTM used for $K$, which determines the probability of each task $2^{-K(\mu)}$, as argued elsewhere (\cite{Hibbard2009} and \cite[secs. 3.3 and 4]{HernandezOrallo-Dowe2010}), but also that the prefix coding is highly relevant as well if we use $K(\mu)$ as difficulty.

Apart from the differences that appear because of the coding, there is a major concern about this: it is now crystal clear that easier problems would have more weight than difficult ones. 
However, our intuition when evaluating a subject on a range of difficulties would be to give more relevance to more difficult problems or, at most, to give the same relevance to all difficulties. In the latter case, a subject being able to score well in all tasks of difficulty $2h$ would double the result than another subject being to score well in all tasks of difficulty $h$.

} 

\section{Difficulty functions}\label{sec:difficulty}

Before setting an appropriate measure of difficulty based on the policy, in this section we will analyse which properties a difficulty function may have.

\commentAGI{
\subsection{Detaching $p(h)$ from $p(\mu|h)$}
}

The decomposition in previous section suggests that we could try to fix a proper measure of difficulty first and then think about a meaningful distribution $p(h)$. Once this is settled, we could try to find a distribution for all environments of that difficulty $p(\mu|h)$. In other words, once we determine how relevant a difficulty is we ask which tasks to take for that difficulty. This is the spirit of the $C$-test \cite{HernandezOrallo-MinayaCollado1998,HernandezOrallo2000a} as seen in eq.~\ref{eq:ctest}. In fact, we perhaps we do not need a $p(h)$ that decays dramatically, as it is expectable to see performance to decrease for increasing difficulty, as in Figure~\ref{fig:Ctest2}.

To distinguish $p(h)$ and $p(\mu|h)$ we will denote the former with $w$ and the latter with $p_M$. We will use any distribution or even a measure (not summing up to one, for reasons that we will see later on) as a subscript for $\Psy$. 
For instance, we will use the following notation $\Psy_{\Unif{(h_{min},h_{max})}}(\pi,M,p_M \TAU)$, where $\Unif{(a,b)}$ represents a uniform distribution between $a$ and $b$. For instance, we can have two agents $\pi_A$ and $\pi_B$ such that $\Psy_{\Unif{(1,10)}}(\pi_A) > \Psy_{\Unif{(1,10)}}(\pi_B)$ but $\Psy_{\Unif{(11,20)}}(\pi_A) < \Psy_{\Unif{(11,20)}}(\pi_B)$. 
We will use the notation $\Psy_{\oplus}(\pi,M,p_M \TAU)$ when $w(h)=1$ (note that this is not the uniform distribution for discrete $h$), which means that the partial aggregations for each difficulty are just added. 
In other words, $\Psy_{\oplus}(\pi,M,p_M \TAU) \triangleq \sum_{h=0}^{\infty}  \Psydiff{h}(\pi,M,p_M \TAU)$ for discrete difficulty functions \commentAGI{and   $\Psy_{\oplus}(\pi,M,p_M \TAU) \triangleq \int_{0}^{\infty} \Psydiff{h}(\pi,M,p_M \TAU) dh$ for continuous difficulty functions}. 
We will explore whether this (area under the agent response curve) is bounded.

Figure~\ref{fig:three-approaches} shows approach A, which has already been mentioned, while approaches B and C will be seen in sections \ref{sec:probs1} and \ref{sec:probs2} respectively.

\begin{figure}
\centering
\vspace{-0.5cm}
\includegraphics[width=0.8\textwidth]{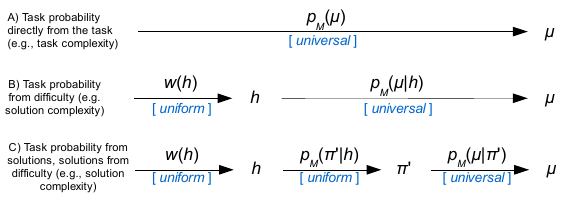}
\vspace{-0.2cm}
\caption{Three approaches to aggregate the results for a set of tasks. Top (A) shows the classical approach of choosing a probability for the task, according to the properties of the task. Middle (B) shows the approach where we arrange tasks by difficulty, and the notion of difficulty is derived from the properties of the policy. Bottom (C) shows a variation of B where we derive acceptable policies for a given difficulty and then generate tasks for each policy. Between square brackets some choices we examine in this paper.}
\label{fig:three-approaches}
\vspace{-0.2cm}
\end{figure}

\commentAGI{
\subsection{Acceptable policies}
}

When we aggregate environments with different scales on $\Response$ and different difficulties, we may have that an agent focusses on a few environments with high difficulty while another focusses on many more environments with small responses. Agent response curves in \cite{upsychometrics2}, which are inspired by item response curves in psychometrics (but inverting the view between agents and items), allow us to see how each agent performs for different degrees of difficulty. Looking at Figure~\ref{fig:Ctest2} and similar agent response curves in psychometrics, we see that the notion of difficulty must be linked to $\Response$, i.e., how well the agents perform, and not about the complexity of the task, as in the previous section. 

\commentAGI{
One first idea of a difficulty function of an environment is the expected response for a random agent $\pi_{rand}$, i.e., $\hbar(\mu) \triangleq \Exp{\Response(\pi_{rand}, \mu \TAU)}$. However, this result is not very meaningful, but just sets some kind of baseline, which will be between $0$ and $1$. This is shown in Figure~\ref{fig:envdiff1} (left) and has the advantage of being applicable to infinite populations of agents $\Pi$ without assuming any distribution of agents. A related idea would be to calculate the expected response for a population of agents, using some distribution, e.g., $\hbar(\mu) \triangleq \sum_{\pi} p_{\Pi}(\pi) \Exp{\Response(\pi, \mu \TAU)}$. As agents are programs, we may have the temptation of using $p_{\Pi}(\pi) = 2^{-K(\pi)}$. If we look at this for an illustrative example in Figure~\ref{fig:envdiff1} (right), we see that this would mean that basically the few results on the lefmost part of the plot will dominate, i.e., the result of the shortest program.

\begin{figure}
\centering
\vspace{-0.2cm}
\includegraphics[width=0.43\textwidth]{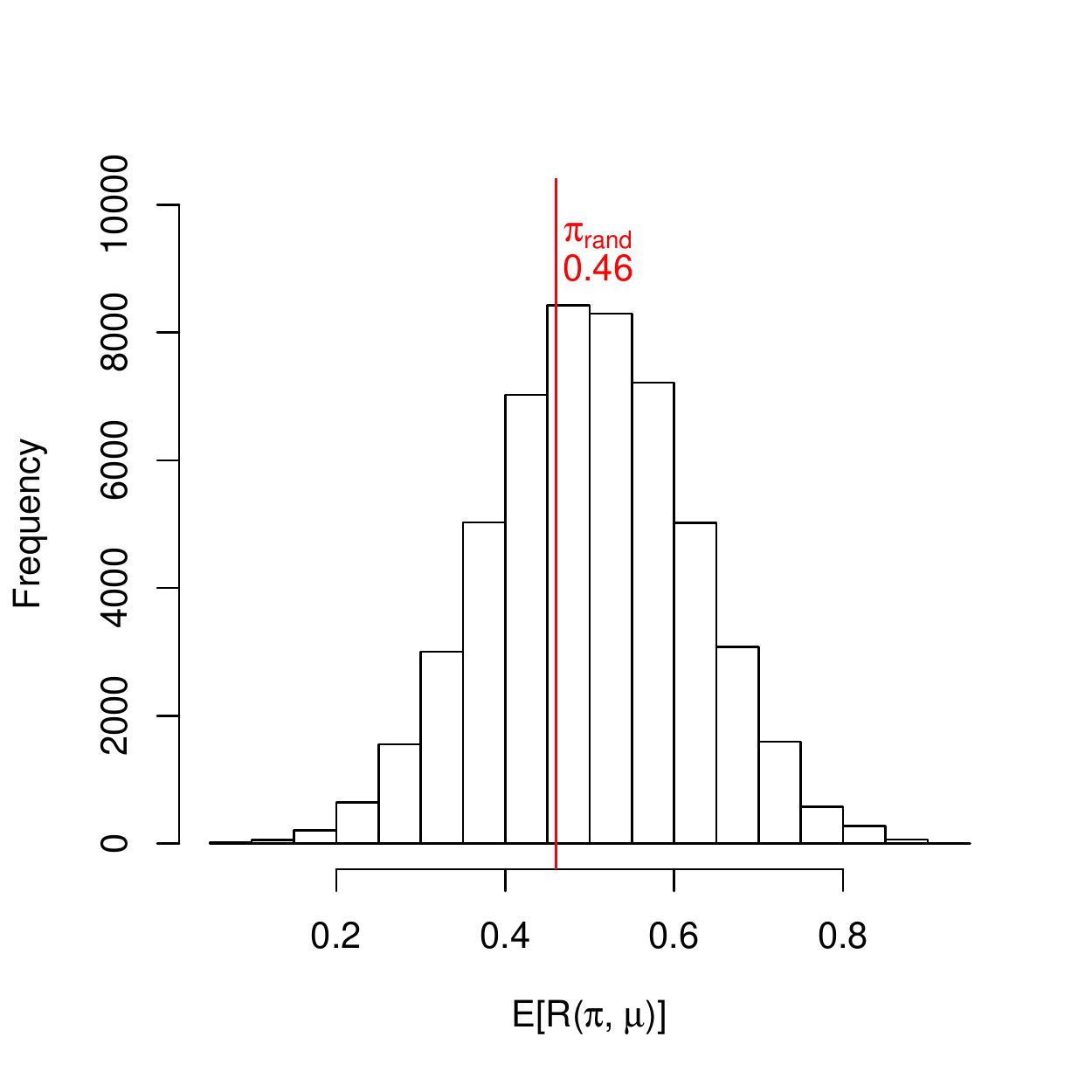}\vspace{0.5cm} \hfill 
\includegraphics[width=0.46\textwidth]{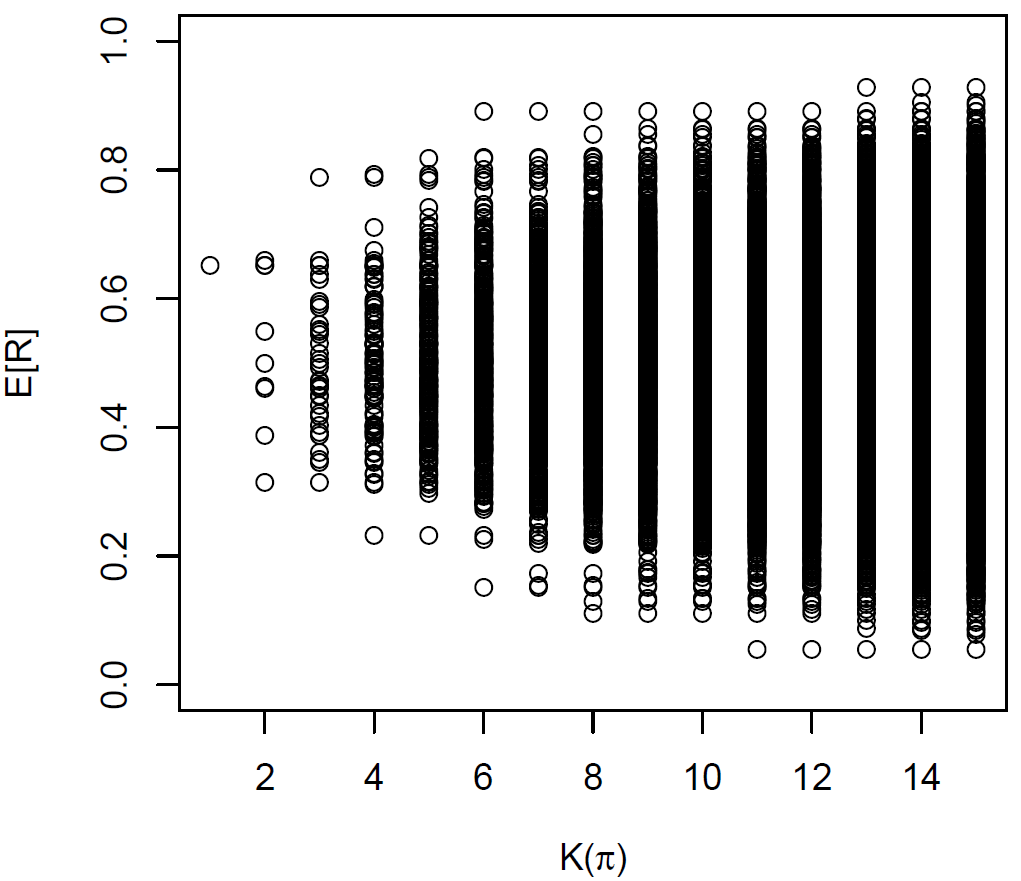} 
\vspace{-0.6cm}
\caption{Left: an illustrative distribution of responses of a population of agents for a single environment, assuming a finite number of agents. A random agent $\pi_{rand}$ has expected response of 0.46. Right: the distribution of responses ordered by $K(\pi)$ on the \xaxis.}
\label{fig:envdiff1}
\vspace{-0.2cm}
\end{figure}

As we want a short policy with a good result, we can think about a ratio between complexity and response of the policy, something like $\hbar(\mu) \triangleq \min_\pi \frac{\Length(\pi)}{\Exp{\Response(\pi,\mu \TAU)} }$. The unit for difficulty would be bits per reward unit. It is not clear why doubling responses should take double bits (perhaps logarithms could be used instead). In the example in Figure~\ref{fig:envdiff1} (right), we see that the leftmost policy ($\Length(\pi)=1$) would have $h= \frac{1}{0.42}=2.38$, and the best with $\Length(\pi)=2$ would have $h= \frac{2}{0.78}=2.56$. In fact, no other policy could get lower than in 2.38. Choosing any other reasonable function instead of a ratio such that we can go further right on the plot would be rather arbitrary. 
} 
Another option is what is done in \cite{orallo2014JAAMAS}, as $\hbar(\mu) \triangleq \min_{\pi : {\Exp{\Response(\pi,\mu \TAU)} = \Response_{max}(\mu \TAU)}} \Length(\pi)$ where $\Response_{max}(\mu \TAU) = \max_\pi {\Exp{\Response(\pi,\mu \TAU)}}$. 
However, 
$\Response_{max}$ may be hard to calculate and even if it can be effectively calculated, any minor mistake or inefficiency in a very good agent will prevent the agent from reaching the optimal result, leading to a notion of difficulty linked to the complexity of the `perfect' policy. In \cite{orallo2014JAAMAS}, a `tolerance value' is considered and, instead of one policy, difficulty is linked to the probability of finding a policy under this tolerance\commentAGI{ by using different search approaches}.

\commentAGI{Here we are going to consider a similar approach.} We are going to consider this tolerance $\epsilon$ of acceptability. 
\begin{equation}\label{eq:accept}
\Acc^{[\epsilon]}(\pi, \mu \TAU) \triangleq \One (\Exp{\Response(\pi, \mu \TAU)} \geq 1- \epsilon )
\end{equation}
This returns 1 if the expected response is above $1-\epsilon$ and 0 otherwise. If $\Acc^{[\epsilon]}(\pi, \mu \TAU) = 1$ we say that $\pi$ is $\epsilon$-acceptable.
With this, we binarise responses. One can argue that we could have just defined a binary $\Response$, but it is important to clarify that it is not the same to have tolerance for each single $\Response$ (or a binarised $\Response$) than to have a tolerance for the expected value $\Exp{\Response}$. The tolerance on the expected value allows the agent to have variability in their results (e.g., stochastic agents) provided the expected value is higher than the tolerance. Finally, even if we will be very explicty about the value of $\epsilon$, and changing it will change the difficulty value of any environment, it is important to say that this value is not so relevant. The reason is that for any environment we can build any other environment where the responses are transformed by any function. In fact, we could actually consider one fixed threshold, such as 0.5, always.

And now we can just define a new version of eq.~\ref{eq:U} using this new function:
\begin{eqnarray}\label{eq:UAcc}
 \Psydiff{h}^{[\epsilon]}(\pi,M,p_M \TAU) \triangleq  \sum_{\mu \in M, \hbar(\mu)= h} p_M(\mu|h) \cdot \Acc^{[\epsilon]}(\pi, \mu \TAU) 
\end{eqnarray}
\noindent We can just rewrite equations \commentAGI{\ref{eq:psi2-cont} and} \ref{eq:psi2-disc} accordingly:
\commentAGI{
\begin{eqnarray}\label{eq:psi2-cont2}
\Psy_w^{[\epsilon]}(\pi,M,p_M \TAU) = \int_{0}^{\infty} w(h) \Psydiff{h}^{[\epsilon]}(\pi,M,p_M \TAU) dh
\end{eqnarray}
}
\begin{eqnarray}\label{eq:psi2-disc2}
\Psy_w^{[\epsilon]}(\pi,M,p_M \TAU) = \sum_{h=0}^{\infty} w(h) \Psydiff{h}^{[\epsilon]}(\pi,M,p_M \TAU)
\end{eqnarray}

\commentAGI{
\subsection{Properties}
}

\commentARXIV{\noindent}Given the above, we are now ready for a few properties about difficulty functions.

\begin{definition}\label{def:boundeddiff} 
A difficulty function $\hbar$ is strongly bounded in $M$ if for every $\pi$ there is a difficulty $h$ such that for every $\mu \in M : \hbar(\mu) \geq h$ we have $\Acc^{[\epsilon]}(\pi, \mu \TAU) = 0$. 
\end{definition}

\commentAGI{
A weaker version of the above property would be as follows:
\begin{definition}\label{def:weaklyboundeddiff} 
A difficulty function $\hbar$ is weakly bounded in $M$ if for every $\pi$ there is a difficulty $h$ such that for every $h' \geq h$ we have $\Psydiff{h'}^{[\epsilon]}(\pi,M,p_M \TAU) = 0$. 
\end{definition}

Clearly, strongly boundedness implies weakly boundeness, but not vice versa. There can be cases that are not strongly bounded but are weakly bounded. For instance, if we just choose $p_M(\mu|h) = 0$ if $h > h'$ for a given $h'$, which is not actually because of a difficulty function, so this is show that weakly boundness is rather a property of the evaluation setting and not only about the dificulty function.




\begin{proposition}\label{prop:boundeddiff}
If a countable difficulty function $\hbar$ is weakly bounded by $\Response$ then for every possible agent $\pi$ we have that:
\begin{eqnarray}\label{eq:psisum}
\Psy_{\oplus}^{[\epsilon]}(\pi,M,p_M \TAU) = \sum_{h=0}^{\infty}  \Psydiff{h}^{[\epsilon]}(\pi,M,p_M \TAU)  
\end{eqnarray}
is finite.
\end{proposition}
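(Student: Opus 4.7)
The plan is to show that the series in equation \ref{eq:psisum} is a sum of non-negative terms that is (a) bounded term-wise by $1$ and (b) identically zero from some index onward, so the total is bounded by a finite integer.

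First I would establish the term-wise bound. Inspecting equation \ref{eq:UAcc}, by equation \ref{eq:accept} we have $\Acc^{[\epsilon]}(\pi, \mu) \in \{0, 1\}$, and the family $\{p_M(\mu \mid h) : \mu \in M,\; \hbar(\mu) = h\}$ is a (conditional) probability distribution and hence sums to $1$. Therefore each summand $\Psydiff{h}^{[\epsilon]}(\pi, M, p_M)$ is a convex combination of values in $\{0,1\}$, so it lies in $[0, 1]$.

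Next I would apply the weak boundedness of $\hbar$ (Definition \ref{def:weaklyboundeddiff}) to the given agent $\pi$: there exists a threshold $h_\pi$ such that $\Psydiff{h'}^{[\epsilon]}(\pi, M, p_M) = 0$ for every $h' \geq h_\pi$. Combining this with the term-wise bound, the outer sum in equation \ref{eq:psi2-disc2} collapses to a sum of at most $\lceil h_\pi \rceil$ non-negative terms, each bounded by $1$; hence $\Psy_{\oplus}^{[\epsilon]}(\pi, M, p_M) \leq \lceil h_\pi \rceil < \infty$, as required.

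The argument is essentially routine, so there is no real obstacle, only a bookkeeping point worth mentioning: the ``countable'' qualifier on $\hbar$ guarantees that the outer sum in equation \ref{eq:psi2-disc2} is a legitimate discrete sum, but the finiteness conclusion actually relies on $\hbar$ taking only finitely many values below the threshold $h_\pi$. This is automatic when $\hbar$ is integer-valued (the natural discrete setting assumed after Proposition \ref{prop:psidecomp}); otherwise, a mild discreteness hypothesis ruling out accumulation points below $h_\pi$ is needed to make the reduction to a finite sum rigorous.
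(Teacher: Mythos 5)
Your proof is correct and follows essentially the same route as the paper's: use weak boundedness to truncate the outer sum at a threshold depending on $\pi$, then bound each remaining term by $1$ since $p_M(\cdot\mid h)$ is a probability distribution and $\Acc^{[\epsilon]}$ is $\{0,1\}$-valued. Your closing remark that countability of the range of $\hbar$ alone does not rule out infinitely many (e.g.\ accumulating) difficulty values below the threshold is a fair point that the paper's own proof glosses over, but it does not change the substance of the argument in the intended integer-valued setting.
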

\begin{proof}
As it is weakly bounded then there is an $h$ such that for every $h' \geq h$ we have $\Psydiff{h'}^{[\epsilon]}(\pi,M,p_M \TAU) = 0$. Consequently, $\sum_{h=0}^{\infty}  \Psydiff{h}^{[\epsilon]}(\pi,M,p_M \TAU)  = \sum_{h=0}^{h'}  \Psydiff{h}^{[\epsilon]}(\pi,M,p_M \TAU)$.  
As $\hbar$ is countable, and $\Psydiff{h}$ is bounded, then eq.~\ref{eq:psisum} is finite.  
\end{proof}

The above is a modification of eq.~\ref{eq:psi2-disc2} where the probability $p(h)$ has been removed. When the above proposition holds we have that the area under the agent response curve is finite. 
We can obtain a similar result for the continuous case (eq.~\ref{eq:psi2-cont2}), using an integral instead.

}

\commentAGI{
\subsection{Difficulty as policy complexity}
}

Now \commentAGI{we are ready to ask what happens if} we choose the difficulty function in terms of $\epsilon$-acceptability, i.e.:
%
%
%
\begin{equation}\label{eq:hbarK}
\hbar^{[\epsilon]}(\mu) \triangleq \min \{ \Length(\pi) : \Exp{\Response(\pi, \mu \TAU)} \geq 1- \epsilon \} = \min \{ \Length(\pi) :  \Acc^{[\epsilon]}(\pi, \mu \TAU) = 1 \}
\end{equation}
\noindent We can say a few words about the cases where a truly random agent gives an acceptable policy for an environment. If this is the case, we intuitively consider the environment easy. So, in terms, of $\Length$, we consider truly random agents to be simple, which is more reasonable than considering them of infinite difficulty, and goes well with our consideration of stochastic agents and environments. 

Figure \ref{fig:envdiff} (left) shows the distribution of response according to $\Length(\pi)$, 
but setting $\epsilon=0.9$. We see that the simplest $\epsilon$-acceptable policy has $\Length=12$. 

\begin{figure}
\centering
\vspace{-0.3cm}
\includegraphics[width=0.45\textwidth]{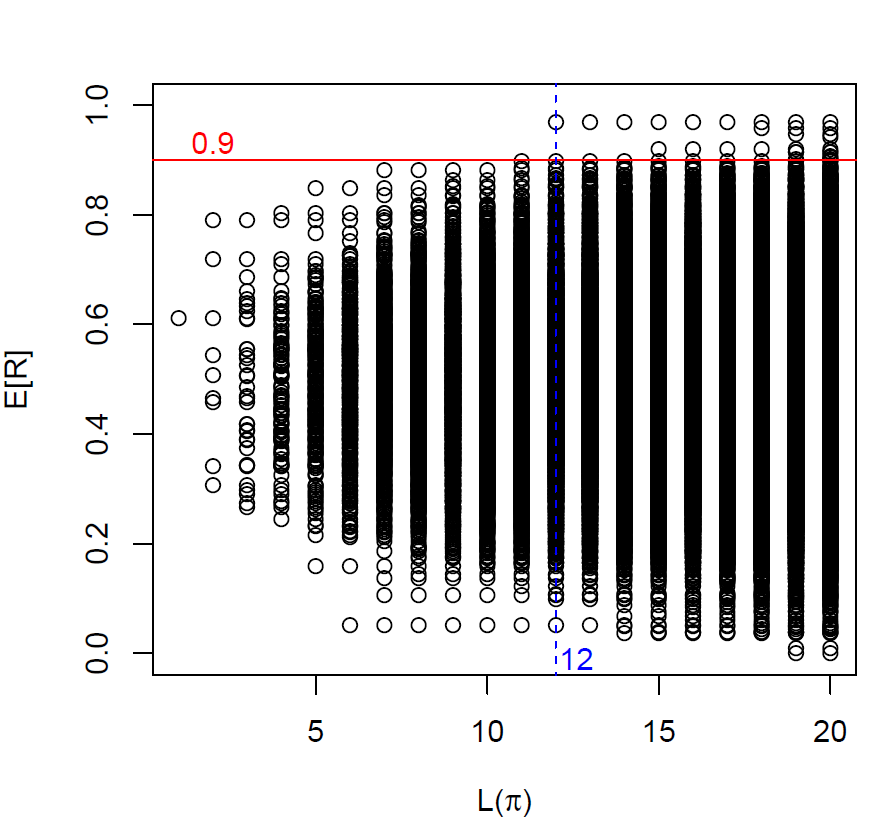}\vspace{0.5cm} \hfill 
\includegraphics[width=0.48\textwidth]{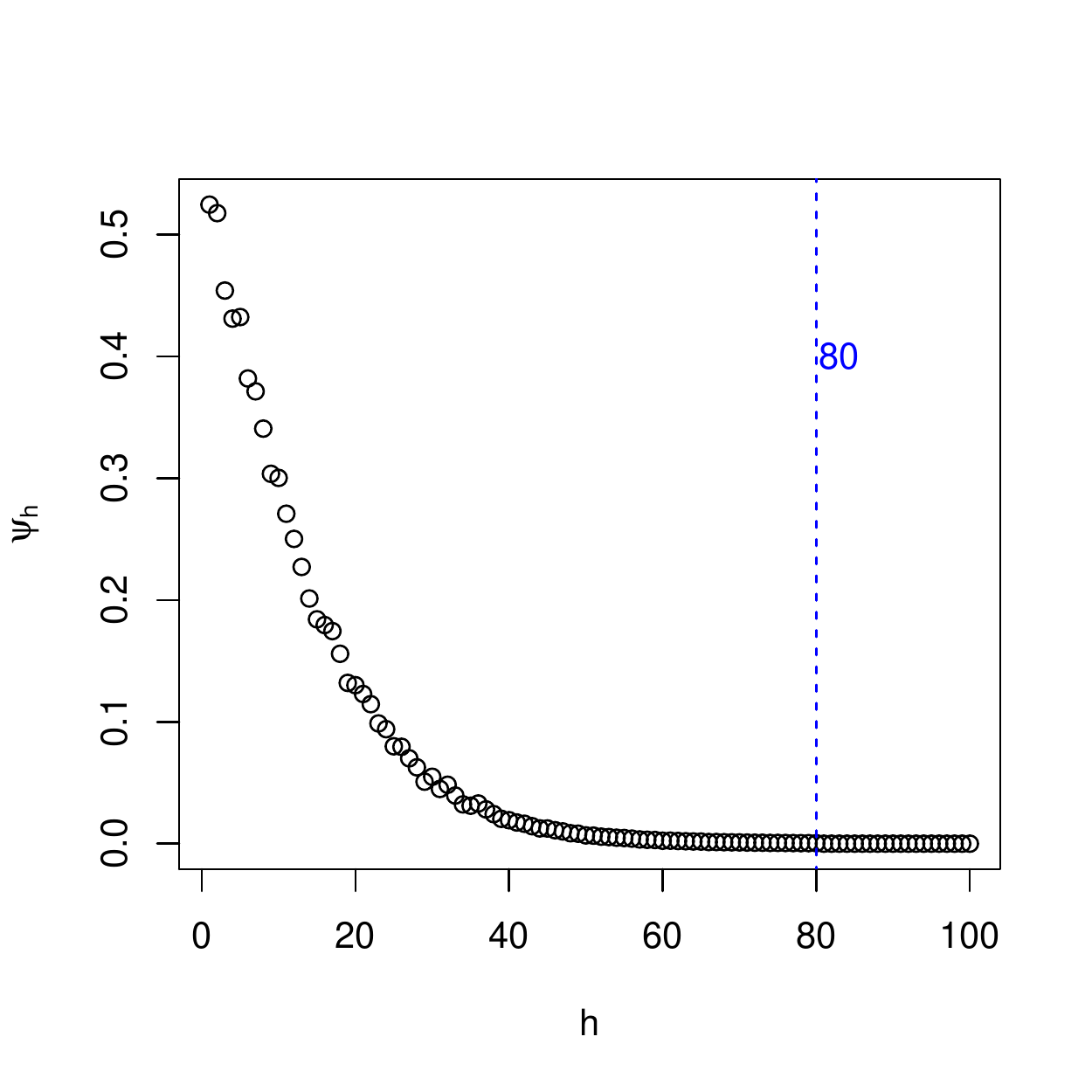} 
\vspace{-0.7cm}
\caption{Left: an illustrative distribution of responses of a population of agents for a single environment. 
If we set the threshold at $0.9 = 1 - \epsilon$, the simplest policy above this threshold is of `complexity' $h=12$. Right: an illustrative distribution of the result of $\Psydiff{h}$ considering a population of environments for a single agent, an {\em agent response curve}. There is no task $\pi$ of difficulty above 80 for which $\Exp{\Response(\pi, \mu \TAU)} \geq 1-\epsilon$, i.e., there is no task for which $\pi$ is $\epsilon$-acceptable, so $\Psydiff{h}$ is 0 from 80 on. If we were using the definition of $\hbar$ as for Eq.~\ref{eq:hbarK}, this 80 would be $\Length(\pi)$. Also note on the right plot that all `heaven' tasks (good results independently of what the agent does) are at $h=1$, while all `hell' tasks (bad response independently of what the agent does) are at $h=\infty$.}
\vspace{-0.3cm}
\label{fig:envdiff}
\end{figure}

With the difficulty function in eq.~\ref{eq:hbarK} we have:

\begin{proposition}\label{prop:bounded}
The difficulty function $\hbar^{[\epsilon]}$ in eq.~\ref{eq:hbarK} is strongly bounded.
\end{proposition}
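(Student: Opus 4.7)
The plan is to exhibit an explicit bound $h$ in terms of $\pi$ and verify the acceptability condition by a direct contrapositive argument. Given any agent $\pi$, I would set $h \triangleq \Length(\pi) + 1$ and claim this $h$ witnesses the strong boundedness property for $\pi$.

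To verify the claim, I take an arbitrary $\mu \in M$ with $\hbar^{[\epsilon]}(\mu) \geq h = \Length(\pi) + 1$ and argue by contradiction. Suppose $\Acc^{[\epsilon]}(\pi, \mu) = 1$. Then $\pi$ is itself an $\epsilon$-acceptable policy for $\mu$, so it belongs to the set $\{\pi' : \Acc^{[\epsilon]}(\pi',\mu) = 1\}$ over which the minimum in eq.~\ref{eq:hbarK} is taken. Consequently
\[
\hbar^{[\epsilon]}(\mu) \;\leq\; \Length(\pi) \;<\; \Length(\pi) + 1 \;=\; h,
\]
contradicting the assumption $\hbar^{[\epsilon]}(\mu) \geq h$. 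Hence $\Acc^{[\epsilon]}(\pi, \mu) = 0$, as required by Definition~\ref{def:boundeddiff}.

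A small detail worth addressing explicitly is the possibility that the set inside the minimum in eq.~\ref{eq:hbarK} is empty, in which case $\hbar^{[\epsilon]}(\mu)$ must be read as $+\infty$. The argument still goes through: if no policy at all is $\epsilon$-acceptable for $\mu$, then in particular $\pi$ is not, so $\Acc^{[\epsilon]}(\pi,\mu) = 0$ holds trivially. Beyond this one bookkeeping point, there is no genuine obstacle — the proof is essentially a one-line unfolding of the definition of the minimum, and the main thing to get right is simply picking the bound $h = \Length(\pi)+1$ (rather than $\Length(\pi)$) so that the strict chain of inequalities yields a contradiction.
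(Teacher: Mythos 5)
Your proof is correct and follows essentially the same route as the paper's: both observe that if $\pi$ were $\epsilon$-acceptable for $\mu$ then the minimum in eq.~\ref{eq:hbarK} would force $\hbar^{[\epsilon]}(\mu) \leq \Length(\pi)$, so any $h > \Length(\pi)$ witnesses strong boundedness. Your explicit handling of the empty-minimum case ($\hbar^{[\epsilon]}(\mu) = +\infty$) is a small bookkeeping point the paper leaves implicit, but it changes nothing substantive.
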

\begin{proof}
For every policy $\pi$, if a task $\mu$ has a difficulty $\hbar^{[\epsilon]}(\mu) > \Length(\pi)$, it means that $\pi$ is not $\epsilon$-acceptable, because otherwise the difficulty would be $\Length(\pi)$ and not $h$. Consequently, $\Acc^{[\epsilon]}(\pi, \mu \TAU) = 0$ for all $\mu$ of difficulty $\hbar^{[\epsilon]}(\mu) > \Length(\pi)$. It is sufficient to take $h >\Length(\pi)$ for every $\pi$ to see that $\hbar$ is strongly bounded.
\end{proof}

This is what we see in Fig.~\ref{fig:envdiff} (right), where $\Length(\pi)=80$. With $\hbar^{[\epsilon]}$ in eq.~\ref{eq:hbarK}, we can ensure that the values are going to be 0 from $h=80$ on. 

This may not be the case for other difficulty functions. We can imagine a situation where the curve never converges to zero. 
For instance, if the difficulty function is decoupled from resources (length and/or steps) of the acceptable policies or we do not use the notion of $\epsilon$-acceptability then we cannot avoid that a very simple policy could eventually score well in a problem with very high difficulty. This would be counter-intuitive, as if there is a simple policy for a difficult problem, the latter should not be considered difficult any more.

\commentAGI{

Finally, it is an interesting question to determine what happens with this definition of difficulty when we consider not only the shortest policy but all the policies that are $\epsilon$-acceptable. 
Below, we just include one possibility, where we weight all the acceptable policies by some extreme distribution 
so that we can keep the original properties.

\begin{equation}\label{eq:hbarKmultiple}
\hbar^{[\epsilon]}(\mu) \triangleq -0.5\log \sum_{\pi} 2^{-2\Length(\pi)} \cdot \Acc^{[\epsilon]}(\pi, \mu \TAU)  
\end{equation}


\noindent The above expression is tweaked so that in the case there is only one acceptable policy, the function gives the same result as eq.~\ref{eq:hbarK} (if the sum only has one the case it reduces to $\Length(\pi)$). 
\comment{

# Some examples
-0.5*log2(2^(-2*10) )  # One of size 10
-0.5*log2(2^(-2*10)+ 2^(11) * 2^(-2*11) )  # One of size 10 and all of size 11
-0.5*log2(2^(-2*10)+ 2^(11) * 2^(-2*11) + 2^(12) * 2^(-2*12) )  # One of size 10 and all of size 11 and size 12

b <- 5
e <- 20
cum <- 0
for (i in b:e) {
  cum <- cum + (2^(i) * 2^(-2*i))  # Imagine all of length i are policies (and no prefix).
#  cum <- cum + (2^(-2*i))  # Imagine only on of length i is a policy
}
cum
2^(-(b-1))  # Note that this dominate cum
-0.5*log2(cum)/2

- log2(cum)  # >= b-1 = 4

}
In a way, this is a more robust version of difficulty than eq.~\ref{eq:hbarK}. Still, with the strong weighting ($2^{-2\Length(\pi)}$) we have used, we can see that strongly boundedness 
also holds for this version:

\begin{proposition}\label{prop:bounded2}
The difficulty function $\hbar^{[\epsilon]}$ in eq.~\ref{eq:hbarKmultiple} is strongly bounded.
\end{proposition}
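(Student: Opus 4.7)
The plan is to mirror the proof of Proposition~\ref{prop:bounded} by turning $\epsilon$-acceptability of a fixed agent into an upper bound on $\hbar^{[\epsilon]}$. Fix an arbitrary $\pi_0 \in \Pi$. I would show that whenever $\Acc^{[\epsilon]}(\pi_0, \mu \TAU) = 1$ we must have $\hbar^{[\epsilon]}(\mu) \leq \Length(\pi_0)$; contrapositively, any $\mu$ with $\hbar^{[\epsilon]}(\mu) > \Length(\pi_0)$ forces $\Acc^{[\epsilon]}(\pi_0, \mu \TAU) = 0$. Choosing any witness $h > \Length(\pi_0)$ (for instance $h = \Length(\pi_0) + 1$, noting that $\hbar^{[\epsilon]}$ is real-valued because of the logarithm) then satisfies Definition~\ref{def:boundeddiff}.

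The key step is a one-line lower bound on the sum inside the logarithm. If $\pi_0$ is $\epsilon$-acceptable on $\mu$, then the term corresponding to $\pi_0$ is present in the sum, so
$$
\sum_{\pi} 2^{-2\Length(\pi)} \cdot \Acc^{[\epsilon]}(\pi, \mu \TAU) \;\geq\; 2^{-2\Length(\pi_0)}.
$$
Because $-0.5\log$ is monotonically decreasing, applying it flips the inequality and gives $\hbar^{[\epsilon]}(\mu) \leq -0.5 \cdot (-2\Length(\pi_0)) = \Length(\pi_0)$, exactly the bound needed.

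There is essentially no obstacle here; the coefficient $2^{-2\Length(\pi)}$ together with the prefactor $-0.5$ in eq.~\ref{eq:hbarKmultiple} is calibrated precisely so that a single acceptable term already reproduces the bound $\hbar^{[\epsilon]}(\mu) \leq \Length(\pi)$ established in Proposition~\ref{prop:bounded}. Any further $\epsilon$-acceptable policies only enlarge the sum and thus lower $\hbar^{[\epsilon]}(\mu)$, which is fully consistent with (and strengthens) the strong-boundedness conclusion. No convergence hypothesis on $\sum_{\pi} 2^{-2\Length(\pi)}$ is needed, since the argument only uses a one-term lower bound on the partial sum.
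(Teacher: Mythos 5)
Your proof is correct and follows essentially the same route as the paper's: both hinge on the observation that a single $\epsilon$-acceptable policy $\pi_0$ contributes the term $2^{-2\Length(\pi_0)}$ to the sum inside the logarithm, which forces $\hbar^{[\epsilon]}(\mu) \leq \Length(\pi_0)$ and lets the argument of Proposition~\ref{prop:bounded} go through with any $h > \Length(\pi_0)$. Your one-term lower bound is in fact a slightly more direct derivation of that upper bound than the paper's bracketing between two extreme cases (all policies of length at least $k$ acceptable versus exactly one acceptable), whose worst-case value $0.5(k-1)$ serves only to bound the function from below and is not needed for strong boundedness itself.
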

\begin{proof}
Assume $k=\Length(\pi)$, the shortest acceptable policy for $\mu$. Let us consider one extreme (worst) case such that all agents of the same or more length are also acceptable. Since there are not more than $2^n$ programs of length $n$ (actually fewer if it is a prefix coding), we have $\hbar^{[\epsilon]} = -0.5 \log \sum_{n \geq k} 2^n2^{-2n} = -0.5 \log \sum_{n \geq k} 2^{-n} = -0.5 \log 2^{-k+1} = 0.5(k-1)$. 
For the other extreme (best) case, which is when we only have one acceptable policy for $\mu$, we have 
 $\hbar^{[\epsilon]} = -0.5 \log 2^{-2k} = k$. 
As this latter value is larger, we apply the same argument than for proposition \ref{prop:bounded} with $h > \Length(\pi)$.
\end{proof}

As the above difficulty function is more complex, 
 in what follows, we will work with the difficulty function $\hbar^{[\epsilon]}$ in eq.~\ref{eq:hbarK}.

}



\section{Difficulty-conditional task probabilities}\label{sec:probs}

In the previous sections we have focussed on $w(h)$ and whether it is necessary or not. We have seen difficulty functions where just aggregating $\Psydiff{h}$ without $w(h)$ (or $w(h)=1)$ leads to a $\Psy_{\oplus}(\pi,M,p_M \TAU)$ that is bounded\commentAGI{ (proposition \ref{prop:boundeddiff})}. The question now is how to choose the conditional probability $p_M(\mu|h)$. 
In the $C$-test, eq.~\ref{eq:ctest}, this was chosen as a uniform distribution. However, this is not possible in an interactive scenario if we consider all possible tasks, as the number of tasks for which there is an acceptable policy $\pi$ of $\Length(\pi)=n$ can be infinite. 
Even if we cannot set a uniform distribution, we want a choice of $p_M(\mu|h)$ that keeps the task diversity (unless there is any special bias to choose the tasks).


\subsection{Task probability depends on difficulty}\label{sec:probs1}

The first thing we can do is to assume $p(\mu|h)$ in eq. \ref{eq:UAcc} as 
$p(\mu|h) = \frac{2^{-K(\mu)}}{\nu(h)}$ if $\hbar^{[\epsilon]}(\mu) = h$ and 0 otherwise, where $\nu(h)$ is a normalisation term to make the mass of the distribution equal to 1, which can be formulated as $\nu(h) = \sum_{\mu : \hbar^{[\epsilon]}(\mu) = h} 2^{-K(\mu)}$.

And now we have:
\begin{eqnarray*}\label{eq:U2}
 \Psydiff{h}^{[\epsilon]}(\pi,M,p_M \TAU)  & =  & \sum_{\mu \in M, \hbar^{[\epsilon]}(\mu)= h} p_M(\mu|h) \cdot \Acc^{[\epsilon]}(\pi, \mu \TAU)  \commentAGI{\\
                   &} =  \commentAGI{&} \frac{1}{\nu(h)} \sum_{\mu \in M, \hbar^{[\epsilon]}(\mu)= h} 2^{-K(\mu)} \cdot \Acc^{[\epsilon]}(\pi, \mu \TAU)  
\end{eqnarray*}
\noindent From here, we can plug it into eq. \ref{eq:psi2-disc2} for the discrete case:
\begin{eqnarray}\label{eq:psi3-disc3}
\Psy_w^{[\epsilon]}(\pi,M,p_M \TAU) = \sum_{h=0}^{\infty} w(h) \frac{1}{\nu(h)} \sum_{\mu \in M, \hbar^{[\epsilon]}(\mu)= h} 2^{-K(\mu)} \cdot \Acc^{[\epsilon]}(\pi, \mu \TAU)  
\end{eqnarray}
\noindent Note that the above is going to be bounded independently of the difficulty function if $w$ is a probability distribution. 
Also notice that $\frac{1}{\nu(h)}$ is on the outer sum, and that $\nu(h)$ is lower than 1, so the normalisation term is actually greater than 1.

And if we use any of the difficulty functions in equations  \ref{eq:hbarK} \commentAGI{or \ref{eq:hbarKmultiple}} we can choose $w(h)=1$ and $\Psy_{\oplus}^{[\epsilon]}(\pi,M,p_M \TAU)$ is bounded.



\subsection{Task probability depends on the policy probability}\label{sec:probs2} 

\commentvisible{
Old material:
Sort by difficulty
All environments of difficulty h
\[ M_h(n, \theta) \triangleq \{ \mu : h(\mu, n, \theta)=h \} \]
This set is infinite.
Note that from our use of probabilistic machines, environnments for which random agents score better than the threshold 

have very low difficulty, which is natural.

** OPTION 1: WRONG
\[ \Upsilon_h(\pi, n, \theta) \triangleq \sum_{\mu in M_h(n, \theta)} p(\mu) R(\pi, \mu, n) \]
If $\mu$ is finite we could use this. But if $\mu$ is infinite we may do differently. Better below.

** OPTION 2: PROBLEMTIC...
For each difficulty consider all possible environments, but make it uniform for all possible "policies".

We define $\Pi_h(n)$ as the set of all agents $\pi$ with $C(\pi,\mu,n) = h$. WE WOULD NEED TO MAKE THE DEFINITION OF $C$ 

INDEPENDENT OF MU AND $\theta$!!! This set is finite.
Now we do a better option:
\[ \Upsilon_h(\pi, n, \theta) \triangleq \sum_{\mu in M_h(n, \theta, \pi' in Pi_h(n)} p(\mu|\pi') p(\pi') R(\pi, \mu, n) \]

$p(\pi) = 1 / |\Pi_h(n)|$, i.e., a uniform distribution. There can be some $\pi$ without any $\mu$!

And for each policy/agent, take a universal distribution (i.e., for all the environments that are solved by $\pi$ and 

not solved by (a significantly, i.e., unquestionability) shorter $\pi$), i.e.
\[p'(\mu|\pi) = 2^K(\mu) if \pi \in S*(\mu,n,\theta)\]
\[p'(\mu|\pi) = 0\] otherwise
and $p(\mu|\pi)$ is a normalisation of $p'$

** OPTION 3: RIGHT
We define 
\[Pair_h(n, \theta) \triangleq \{ <\mu, \pi> : \mu \in M_h and \pi \in S*(\mu, n, \theta) \}  \]
i.e., pairs or environments and policies (for that environment) such that $\mu$ has difficulty $h$ and $\pi$ gives that 

policy.
This set is infinite.
We can derive
\[\Pi_h(n, \theta) \triangleq \{ \pi : <\mu, \pi> \in Pair_h(n, \theta) \}\]
With this definition we now have a finite set of 

difficulty h.

\[\Upsilon_h(\pi, n, \theta) \triangleq \sum_{<\mu,\pi'> in Pair_h(n, \theta} p(\mu,\pi') R(\pi, \mu, n)\]

And now we work with $p(\mu,\pi')$ as follows:

\[p(\mu,\pi') = p(\mu|\pi') p(\pi')\]

where $p(\pi') =  1 / |\Pi_h(n)|$, i.e., a uniform distribution.
Now it is well-defined. And we go on as in option 2:

And for each policy/agent, take a universal distribution (i.e., for all the environments that are solved by $\pi$ and 

not solved by (a significantly, i.e., unquestionability) shorter $\pi$), i.e.
\[p'(\mu|\pi) = 2^K(\mu) if \pi \in S*(\mu,n,\theta)\]
\[p'(\mu|\pi) = 0\] otherwise
and $p(\mu|\pi)$ is a normalisation of $p'$.

In other words, we take all the simplest policies for at least one environment of difficulty $h$. For each of them we 

weight alls the environments for each of them using a universal distribution.

And now we can define:

\[\Upsilon(\pi, n, \theta) \triangleq \sum_{h in 0 to \infty} \Upsilon_h(\pi, n, \theta)\]

Prop. 1. Show that for every agent with finite C and for every $\theta > 0$ there is a difficulty h such that for every 

$h' > h$, we have that $\Upsilon_h(\pi, n, \theta) = 0$
Proof. It is sufficient to choose $h = C$. Consider any $h'>h$, if they could get some result above $\theta$, then the 

environment would not be in difficulty $h'$, but in $C$.

Corollary. For every agent $\pi$ with finite $C$, $\Upsilon$ is finite.

The measure is finite for any (resource-bounded) agent)
We sum for all thresholds. If rewards go between 0 and 1, we can use a uniform distribution.
}

One of things of the use of equation \ref{eq:hbarK} is that the number of acceptable policies per difficulty is finite. This is what happened in the $C$-test and that is the reason why a uniform distribution could be used for the inner sum. We could try to decompose the inner sum 
 by using the policy and get the probability of the task given the policy. 

\commentAGI{
We first need to define this set:
\begin{eqnarray*}\label{eq:pair}
Pairs(M,\Pi \TAU,\epsilon) & \triangleq & \{ \left\langle \mu, \pi \right\rangle : \mu \in M, \pi \in \Pi, \mbox{and} \: \pi \: \mbox{is an $\epsilon$-acceptable policy for} \: \mu \} \\
     & = & \{ \left\langle \mu, \pi \right\rangle : \mu \in M, \pi \in \Pi, \mbox{and} \: \Acc^{[\epsilon]}(\pi, \mu \TAU)  = 1 \}
\end{eqnarray*}
\noindent Note that one task can have many acceptable policies (and perhaps none) and one agent can be an acceptable policy of many tasks (and perhaps none).
Now we can have an alternative expression of difficulty equivalent to eq.~\ref{eq:hbarK} as follows:
\begin{eqnarray*}\label{eq:diff1}
\hbar^{[\epsilon]}(\mu) = \min_{\left\langle \mu, \pi \right\rangle \in Pairs(M,\Pi \TAU,\epsilon)} \Length(\pi) 
\end{eqnarray*}
\noindent We want to set $p_M(\mu|h)$ by decomposing it into $p_M(\mu|\pi)$ and $p_M(\pi|h)$.
\begin{eqnarray*}\label{eq:U3}
 \Psydiff{h}^{[\epsilon]}(\pi,M,p_M \TAU)  & =  & \sum_{\mu \in M, \hbar^{[\epsilon]}(\mu)= h} p_M(\mu|h) \cdot \Acc^{[\epsilon]}(\pi, \mu \TAU)  \\
                                     & =  & \sum_{\pi' : K(\pi) = h} p_M(\pi'|h) \sum_{\mu \in M : \left\langle \mu, \pi' \right\rangle \in Pairs(M,\Pi \TAU,\epsilon)} p_M(\mu|\pi') \cdot\Acc^{[\epsilon]}(\pi, \mu \TAU)
\end{eqnarray*}
\noindent If we use the difficulty function in equation \ref{eq:hbarK} we can assume that $p_M(\pi'|h)$ is uniform. If $N_\Pi(h)$ denotes the number of programs $\pi'$ for which $\Length(\pi')= h$, we can assume $p_M(\pi'|h) = \frac{1}{N(h)}$. So we can rewirte
\begin{eqnarray*}\label{eq:U3b}
 \Psydiff{h}^{[\epsilon]}(\pi,M,p_M \TAU)  & =  & \sum_{\pi' : K(\pi') = h}\frac{1}{N_\Pi(h)} \sum_{\mu \in M : \left\langle \mu, \pi' \right\rangle \in Pairs(M,\Pi \TAU,\epsilon)} p_M(\mu|\pi') \cdot \Acc^{[\epsilon]}(\pi, \mu \TAU)
\end{eqnarray*}
\noindent Now we may have the temptation to define $p_M(\mu|\pi')$ with a conditional universal distribution, i.e., $2^{-K(\mu|\pi')}$. There is a conundrum here, as we would have that those environments that can be easily described from the policy would be more likely. Even if $K$ is not symmetric there are strong relations in both directions such that the notion of difficulty would be completely mangled. Instead, it seems more reasonable to choose the distribution as $p_M(\mu|\pi') = \frac{2^{-K(\mu)}}{\nu(\pi')}$ if $\left\langle \mu, \pi' \right\rangle \in Pairs(M,\Pi \TAU)$ and 0 otherwise, where $\nu(\pi')$ is a normalisation term, which can be calculated as $\nu(\pi') = \sum_{\left\langle \mu, \pi' \right\rangle \in Pairs(M,\Pi \TAU)} 2^{-K(\mu)}$.

We can integrate the above into eq. \ref{eq:psi2-disc2} again. 
\begin{eqnarray}
\Psy_w^{[\epsilon]}(\pi,M,p_M \TAU) & = &\sum_{h=0}^{\infty} w(h) \sum_{\pi : K(\pi) = h}\frac{1}{N_\Pi(h)} \sum_{\mu \in M : \left\langle \mu, \pi' \right\rangle \in Pairs(M,\Pi \TAU,\epsilon)} \frac{2^{-K(\mu)}}{\nu(\pi')} \cdot \Acc^{[\epsilon]}(\pi, \mu \TAU)  \nonumber \\
 & = &\sum_{h=0}^{\infty}\frac{w(h)}{N_\Pi(h)}  \sum_{\pi : K(\pi) = h} \frac{1}{\nu(\pi')}\sum_{\mu \in M : \left\langle \mu, \pi' \right\rangle \in Pairs(M,\Pi \TAU,\epsilon)} 2^{-K(\mu)} \cdot \Acc^{[\epsilon]}(\pi, \mu \TAU) \label{eq:psi4}
\end{eqnarray}
\noindent} 
The interpretation would be as follows: for each difficulty value we aggregate all the acceptable policies with size equal to that difficulty uniformly and for each of these policies all the environments where each policy is acceptable with a universal distribution. 
This extra complication with respect to eq. \ref{eq:psi3-disc3} can only be justified if we generate environments and agents and we check them as we populate $Pairs$, as a way of constructing a test more easily. \commentAGI{Once a sample of $Pairs$ is generated we may want to see how to arrange everything and give appropriately weights to each case.

Apart from this, eq. \ref{eq:psi3-disc3} seems more natural and simpler than eq.~\ref{eq:psi4} as an extension of eq.~\ref{eq:ctest}.

}

\section{Using computational steps}\label{sec:steps}

As we mentioned in the introduction, the $C$-test \cite{HernandezOrallo-MinayaCollado1998,HernandezOrallo2000a} used Levin's $Kt$ instead of $K$. Apart from being computable, $Kt$ is related to Levin's search. This makes its choice much more appropriate for a measure of difficulty. However, when working with interactive tasks and with stochastic tasks and agents, we have that the number of computational steps must be calculated as $\Exp{\Steps(\pi,\mu \TAU)}$. 
For simplicity, up to this point, we have considered just $K$. Now we are briefly exploring the inclusion of the computational steps. 
\commentAGI{In section \ref{sec:notation}} \commentARXIV{In section \ref{sec:background}} we defined $\LS$. Now we define a version that accounts for the tolerance $\epsilon$ as follows:
\begin{eqnarray*}\label{eq:lsepsilon}
\LS^{[\epsilon]}(\pi,\mu \TAU) \triangleq \Exp{\LS(\pi,\mu \TAU)} \: \mbox{if} \: \Acc^{[\epsilon]}(\pi, \mu \TAU) = 1 \: \mbox{and}  \: \infty \: \mbox{otherwise} 
\end{eqnarray*}
\noindent and we define a new difficulty function that considers computational steps:
\begin{eqnarray*}\label{eq:hbarLS}
\hbar^{[\epsilon]}(\mu) \triangleq \min_\pi \LS^{[\epsilon]}(\pi,\mu \TAU)
\end{eqnarray*}
\noindent 
This difficulty function is not bounded, as $\LS$ depends on $\mu$, and we can always find a very short policy that 
takes an enormous amount of steps for a task with very high difficulty. This is an acceptable policy, but does not reduce the difficulty of the task, so it can always score non-zero beyond any limit. 

This means that for this difficulty function we would need to use equation eq.~\ref{eq:psi2-disc2} \commentAGI{or eq.~\ref{eq:psi2-cont2}} with an appropriate $w(h)$ (e.g., a small decay or a uniform interval of difficulties we are interested in).

 %
%
%
If the testing procedure established a limit on the number of steps (total or per transition) we would have this new difficulty function would be strongly bounded. 
Alternatively, we could reconsider the inclusion the computational steps in the notion of acceptability.  
In this case, the approach in section~\ref{sec:probs2} could not be used, as the probability of $\pi$ given $h$ would also depend on $\mu$. 
%
%
%


\commentvisible{

What about $\tau$?

The set of simplest policies giving 

\[ S^*(\mu \TAU, \epsilon) \triangleq \argmin_\pi \LS^{[\epsilon]}(\pi,\mu \TAU) \]

Note that $S^*$ can have equivalent formulations of a policy with the same length and execution time. Two alternative ways 
of doing the same thing with the same program length.
We define $s*(\mu,n, \theta)$ as the first lexicographically in $S*(\mu,n, \theta)$. Do we need it?

Stability: if the score is the average, look for a value of $t$ such that for every $\tau' > \tau$
\[ \argmin_{\tau} \LS^{[\epsilon]}(\pi,\mu \TAU) >= \theta] =  \argmin_{\tau'} \LS^{[\epsilon]}(\pi,\mu \TAU)' \]
i.e.,
$S*(\mu, t', \theta) = S*(\mu, t, \theta)$

i.e., the simplest policy above the threshold for $t$ is the simplest policy forever. This is known as the simplest policy for $\mu$.

Proposition: for every finite (and deterministic?) $\mu$ (considering tasks and agents is memory and space bounded) there is a $t$ from which it is stable.
Proof: $\mu$ and $\pi$ will have finite states and we can explore every $\pi$ until the state for $\mu$ and $\pi$ are both in the same state. Define a policy $\pi$ that executes $\mu$ with every possible $\pi$ and evaluate it... (when we find the first $\pi$ above the threshold we only need to check simpler programs). 
Stability is related to ergodicity. It is a better alternative but clearly not equivalent. Why? If the environment has a heaven (or an episodic goal) that can be achieved with $\pi$, then if it surpass the threshold with t then it will surpass the threshold with any t'>=t. If the environment has an unavoidable hell, then it is not stable. Of course if the heaven is unavoidable, $LST(\pi)$ is very low.
If we liked something more similar to ergodicity, the average of $R$ should be made with a sliding window and with a preceding random agent. 

Also unquestionability? Can be defined, but there is no need for that with environments... (interactive tasks)

}

\section{Discussion}\label{sec:discussion}

We have gone from eq.~\ref{eq:ctest} taken from $C$-test to eq.~\ref{eq:psi2-disc2} when using discrete difficulty functions \commentAGI{or to eq.~\ref{eq:psi2-cont2} when using continuous difficulty functions}. We have seen that difficulties allow for a more detailed analysis of what happens for a given agent, depending on whether it succeeds at easy or difficult tasks. For some difficulty functions\commentAGI{ (e.g., those for which proposition \ref{prop:boundeddiff} is true)}, we do not even need to determine the weight for each difficulty and just calculate the area, as an aggregated performance for all difficulties. Actually, given the characteristics of most of the difficulty functions we have seen, setting a maximum difficulty could be enough, for instance, 
$h=200$, more than convenient, as it is really challenging to look into a space of policies of 200 bits (without prior knowledge). So, a finite range of difficulties would be enough for the evaluation of feasible agents.

The important thing is that now we do not need to set an a priori distribution for all tasks $p(\mu)$, but just a conditional distribution $p(\mu|h)$. Note that if we set a high $h$ we have the freedom to find the simplest task that creates that difficulty. In fact, this is desirable, as when we try to create a difficult game with a small number of rules. 
Actually, the choice of $p(\mu|h)$ as a universal distribution still depends on the reference machine and can set most of the probability mass on smaller tasks, but as it is conditional on $h$, all trivial, dead or simply meaningless tasks have usually very extreme values of $h$ (very low or infinite). That means that there is a range of {\em intersting} difficulties, discarding very small values of $h$ and very large values of $h$. Figure~\ref{fig:Ctest2} is a nice example of this, where only difficulties between 1 and 8 were used, and we see also that $h=1$ and $h>7$ are not really very discriminating. The bulk of the testing effort must be performed in this range.

\commentAGI{
Now, let us spare a comment about the practical feasibility of generating a test that is conceived in terms of difficulty, which is determined from the length of the policy. It is pertinent to quote a piece from \cite{leggveness2013approximation}: ``Another important difference in our work is that we have directly sampled from program space. This is analogous to the conventional construction of the Solomonoff prior, which samples random bit sequences and treats them as programs. With this approach all programs that compute some environment count towards the environment's effective complexity, not just the shortest, though the
shortest clearly has the largest impact. This makes AIQ very efficient in practice since we can just run sampled programs directly, avoiding the need to have to compute complexity values through techniques such as brute force program search. For example, to compute the complexity of a 15 symbol program, the C-test required the execution of over 2 trillion programs. For longer programs,
such as many that we have used in our experiments, this would be completely intractable. One disadvantage of our approach, however, is that we never know the complexity of any given environment; instead we know just the length of one
particular program that computes it.''. While I agree that the calculation of difficulties may be hard, a couple of things must be said. First, yet again the previous quote talks about the {\em complexity of an environment}, which is based on its description (or on its Solomonoff prior), but not on its policies. Second, it is always preferrable to devote time to design good tests that lead to a few tasks that are discriminating and whose difficulty we are sure of than to have a test that can be obtained more efficiently but the agent will take thousands of useless tasks. In other words, we want tests that are practical when they are applied. We can devote much more resources (and time) for the offline generation of appropriate tasks and the calculation of their difficulty. This is actually what real measurement disciplines do, such as psychometrics, which devote most of the effort to get good tasks and do not hesitate to discard those tasks whose difficulty is dubious or are not discriminating. 
Effort in the assessment and classification of exercises pays off in more efficient evaluations.
}

\commentAGI{Finally, there is an important question about the choice of a meaningful difficulty function linked to the effort required to find an acceptable policy: what if difficulty depends on the verification of the policy? This has got attention for problems where the policy must be accompanied by a verification, proof or explanation \cite{HernandezOrallo00b,alpcan2014}. In other words, the question is whether we can extend Levin's search by considering the verification of stochastic problems and derive a difficulty function from it. This is being addressed in a separate note \cite{Upsychotasksarxiv}.}



\bibliography{biblio}

\end{document}